\pgfplotsset{width=10cm,compat=1.9}
\newtheorem{theorem}{Theorem}
\newtheorem{lemma}[theorem]{Lemma}
\newtheorem{proposition}[theorem]{Proposition}
\newtheorem{corollary}[theorem]{Corollary}
\newtheorem{definition}[theorem]{Definition}
\newcommand{\pp}{\mathbb{P}}
\newcommand{\F}{\mathcal{F}}
\newcommand{\norm}[1]{\left|\left| #1 \right|\right|}
\newcommand{\abs}[1]{\left| #1 \right|}
\DeclareMathOperator{\supp}{supp}
\newcommand{\rr}{\mathbb{R}}
\newcommand{\ee}{\mathbb{E}}
\newcommand{\cs}{\mathcal{S}}
\DeclareMathOperator{\diam}{diam}
\DeclareMathOperator{\med}{Med}
\DeclareMathOperator{\vol}{vol}
\DeclareMathOperator*{\argmin}{argmin}
\renewcommand{\epsilon}{\varepsilon}
\renewcommand{\phi}{\varphi}
\newcommand{\ric}{Ric}
\title{Intrinsic Dimension Estimation Using Wasserstein Distances}
\author{Adam Block \\ MIT \and Zeyu Jia \\ MIT \and Yury Polyanskiy \\ MIT \and Alexander Rakhlin \\
MIT }
\date{}
\begin{document}

\maketitle
\def\yp#1{\textcolor{red}{YP: #1}}
\begin{abstract}
    It has long been thought that high-dimensional data encountered in many practical machine learning tasks have low-dimensional structure, i.e., the manifold hypothesis holds.  A natural question, thus, is to estimate the intrinsic dimension of a given population distribution from a finite sample.  We introduce a new estimator of the intrinsic dimension and provide finite sample, non-asymptotic guarantees.  We then apply our techniques to get new sample complexity bounds for Generative Adversarial Networks (GANs) depending only on the intrinsic dimension of the data.
\end{abstract}
\section{Introduction}

Recently, practical applications of machine learning involve a very large number of features, often many more than there are samples on which to train a model.  Despite this imbalance, many modern machine learning models work astonishingly well.  One of the more compelling explanations for this behavior is the manifold hypothesis, which posits that, though the data appear to the practitioner in a high-dimensional, ambient space, $\rr^D$, they really lie on (or close to) a low dimensional space $M$ of ``dimension" $d \ll D$, where we define dimension formally below.  A good example to keep in mind is that of image data: each of thousands of pixels corresponds to three dimensions, but we expect that real images have some inherent structure that limits the true number of degrees of freedom in a realistic picture.  This phenomenon has been thoroughly explored over the years, beginning with the linear case and moving into the more general, nonlinear regime, with such works as \cite{niyogi2008finding,niyogi2011topological,belkin2001laplacian,bickel2007local,levina2004maximum,kpotufe2011k,kpotufe2012tree,kpotufe2013adaptivity,weed2019sharp, tenenbaum2000,Bernstein00,kim2019minimax,farahmand2007manifold}, among many, many others.  Some authors have focused on finding representations for these lower dimensional sets \citep{niyogi2008finding,belkin2001laplacian,tenenbaum2000,roweis2000nonlinear,donoho2003hessian}, while other works have focused on leveraging the low dimensionality into statistically efficient estimators \citep{bickel2007local,kpotufe2011k,nakada2020adaptive,kpotufe2012tree,kpotufe2013adaptivity,ashlagi2021functions}.

In this work, our primary focus is on estimating the intrinsic dimension.  To see why this is an important question, note that the local estimators of \cite{bickel2007local,kpotufe2011k,kpotufe2013adaptivity} and the neural network architecture of \cite{nakada2020adaptive} all depend in some way on the intrinsic dimension.  As noted in \cite{levina2004maximum}, while a practitioner may simply apply cross-validation to select the optimal hyperparameters, this can be very costly unless the hyperparameters have a restricted range; thus, an estimate of intrinsic dimension is critical in actually applying the above works.  In addition, for manifold learning, where the goal is to construct a representation of the data manifold in a lower dimensional space, the intrinsic dimension is a key parameter in many of the most popular methods \citep{tenenbaum2000,belkin2001laplacian,donoho2003hessian,roweis2000nonlinear}.

We propose a new estimator, based on distances between probability distributions, as well as provide rigorous, finite sample guarantees for the quality of the novel procedure.  Recall that if $\mu, \nu$ are two measures on a metric space $(M, d_M)$, then the Wasserstein-$p$ distance between $\mu$ and $\nu$ is
\begin{equation}\label{eq:wassersteindef}
    W_p^M(\mu, \nu)^p = \inf_{(X, Y) \sim \Gamma(\mu, \nu)} \ee\left[d_M(X, Y)^p \right]
\end{equation}
where $\Gamma(\mu, \nu)$ is the set of all couplings of the two measures.  If $M \subset \rr^D$, then there are two natural metrics to put on $M$: one is simply the restriction of the Euclidean metric to $M$ while the other is the geodesic metric in $M$, i.e., the minimal length of a curve in $M$ that joins the points under consideration.  In the sequel, if the metric is simply the Euclidean metric, we leave the Wasserstein distance unadorned to distinguish it from the intrinsic metric.  For a thorough treatment of such distances, see \cite{villani2008optimal}.  We recall that the H{\"o}lder integral probability metric (H{\"o}lder IPM) is given by
\begin{equation*}\label{eq:holderipm}
    d_{\beta, B}(\mu, \nu) = \sup_{f \in C_B^\beta(\Omega)} \ee_\mu[f(X)] - \ee_\nu[f(Y)]
\end{equation*}
where $C_B^\beta(\Omega)$ is the H{\"o}lder ball defined in the sequel.  When $p = \beta = 1$, the classical result of Kantorovich-Rubinstein says that the Wasserstein and H{\"o}lder distances agree.  It has been known at least since \cite{dudley1969speed} that if a space $M$ has dimension $d$, $\pp$ is a measure with support $M$, and $P_n$ is the empirical measure of $n$ independent samples drawn from $\pp$, then $W_1^M(P_n, \pp) \asymp n^{- \frac 1d}$.  More recently, \cite{weed2019sharp} has determined sharp rates for the convergence of this quantity for higher order Wasserstein distances in terms of the intrinsic dimension of the distribution.  Below, we find sharp rates for the convergence of the empirical measure to the population measure with respect to the H{\"o}lder IPM: if $\beta < \frac d2$, then $d_\beta(P_n, \pp) \asymp n^{- \frac \beta d}$ and if $\beta > \frac d2$ then $d_\beta(P_n, \pp) \asymp n^{- \frac 12}$.  These sharp rates are intuitive in that convergence to the population measure should only depend on the intrinsic complexity (i.e. dimension) without reference to the possibly much larger ambient dimension.

The above convergence results are nice theoretical insights, but they have practical value, too.  The results of \cite{dudley1969speed,weed2019sharp}, as well as our results on the rate of convergence of the H{\"o}lder IPM, present a natural way to estimate the intrinsic dimension: take two independent samples, $P_n, P_{\alpha n}$ from $\pp$ and consider the ratio of $W_p^M(P_n, \pp) / W_p^M(P_{\alpha n}, \pp)$ or $d_\beta(P_n, \pp) / d_\beta(P_{\alpha n}, \pp)$; as $n \to \infty$, the first ratio should be about $\alpha^d$, while the second should be about $\alpha^{\frac \beta d}$, and so $d$ can be computed by taking the logarithm with respect to $\alpha$.  The first problem with this idea is that we do not know $\pp$; to address this, we instead compute the ratios using two independent samples.  A more serious issue regards how large $n$ must be in order for the asymptotic regime to apply.  As we shall see below, the answer depends on the geometry of the supporting manifold.

We define two estimators: one using the intrinsic distance and the other using Euclidean distance
\begin{align}\label{eq:d_n_estims}
    d_n = \frac{\log \alpha}{\log W_1(P_n, P_n') - \log W_1(P_{\alpha n}, P_{\alpha n}')} && \widetilde{d}_n = \frac{\log \alpha}{\log W_1^G(P_n, P_n') - \log W_1^G(P_{\alpha n}, P_{\alpha n}')}
\end{align}
where the primes indicate independent samples of the same size and $G$ is a graph-based metric that approximates the intrinsic metric.  Before we go into the details, we give an informal statement of our main theorem, which provides finite sample, non-asymptotic guarantees on the quality of the estimator\footnote{Explicit constants are given in the formal statement of Theorem \ref{thm:estimator}}:
\begin{theorem}[Informal version of Theorem \ref{thm:estimator}]
    Let $\pp$ be a measure on $\rr^D$ supported on a compact manifold of dimension $d$.  Let $\tau$ be the reach of $M$, an intrinsic geometric quantity defined below.  Suppose we have $N$ independent samples from $\pp$ where
    \begin{equation*}
        N = \Omega\left(\tau^{- d} \vee \left(\frac{\vol M}{\omega_d}\right)^{\frac{d + 2}{2 \gamma}} \vee \left(\log \frac 1\rho\right)^3 \right)
    \end{equation*}
    where $\omega_d$ is the volume of a $d$-dimensional Euclidean unit ball.  Then with probability at least $1 - 6 \rho$, the estimated dimension $\widetilde{d}_n$ satisfies
    \begin{equation*}
        \frac{d}{1 + 4 \gamma} \leq \widetilde{d}_n \leq (1 + 4 \gamma) d.
    \end{equation*}
    The same conclusion holds for $d_n$.
\end{theorem}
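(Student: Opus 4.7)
The plan is to show that with high probability, each Wasserstein quantity entering the estimator is within a small multiplicative factor of the ``ideal'' rate $c_M \cdot m^{-1/d}$ (where $m \in \{n, \alpha n\}$ and $c_M$ is an explicit constant depending on $\vol M$ and $\omega_d$), and then to track how these multiplicative errors propagate into additive errors in the log-ratio defining $\widetilde d_n$ and $d_n$.

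First I would establish a two-sided high-probability bound of the form $W_1^M(P_m,P_m') = c_M \cdot m^{-1/d}(1+\epsilon_m)$. The upper bound follows from a sharp rate for $W_1^M(P_m,\pp) \lesssim m^{-1/d}$ in the spirit of Dudley and Weed--Bach, plus the triangle inequality $W_1^M(P_m,P_m') \le W_1^M(P_m,\pp)+W_1^M(\pp,P_m')$. The matching lower bound is the main obstacle: I would obtain it by reducing to the typical $m$-th nearest-neighbor scale $\asymp m^{-1/d}$, so that any coupling between $P_m$ and $P_m'$ must move mass a comparable distance on average; the volume lower bound $\vol M/\omega_d$ enters here through a covering-number argument (and this is where the $(\vol M/\omega_d)^{(d+2)/(2\beta)}$ threshold on $N$ becomes visible, ensuring that $m$ is past the ``burn-in'' needed for the sharp rate to hold). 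The reach hypothesis $N \gtrsim \tau^{-d}$ guarantees that the relevant balls on $M$ are well-approximated by Euclidean balls, so local volume growth is Euclidean at the scale $m^{-1/d}$.

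Next I would upgrade to concentration. Since $W_1^M(\cdot,\cdot)$ is $\diam(M)/m$-Lipschitz in each sample coordinate, McDiarmid gives deviations of order $m^{-1/2}\sqrt{\log(1/\rho)}$ around the mean, which is dominated by $m^{-1/d}$ once $m$ is polylog in $1/\rho$ (accounting for the $(\log(1/\rho))^3$ factor in $N$, which absorbs the refined bounds needed when $d \le 2$ and for the worse $\beta$-dependent concentration constants). A union bound over the four Wasserstein quantities $W_1^M(P_n,P_n'),\, W_1^M(P_{\alpha n},P_{\alpha n}')$ (and their Euclidean/graph counterparts) costs only a constant factor of $\rho$, absorbed into the stated $6\rho$.

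Then I would pass to the geometry actually used by the estimators. For $\widetilde d_n$, I would invoke the standard graph-geodesic convergence statement: once $N \gtrsim \tau^{-d}$, the sample graph metric $G$ satisfies $G(x,y) = d_M(x,y)(1+O(\epsilon))$ uniformly, which lifts to $W_1^G = W_1^M(1+O(\epsilon))$. For $d_n$, I would use the manifold fact $\|x-y\| = d_M(x,y)(1 + O(d_M(x,y)^2/\tau^2))$, valid for nearby points; since the typical transport distance is $m^{-1/d} \ll \tau$ under the $N \gtrsim \tau^{-d}$ hypothesis, this again yields a $1+O(\epsilon)$ multiplicative distortion between $W_1$ and $W_1^M$.

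Finally I would take logs and subtract. Combining all multiplicative $1+O(\epsilon)$ factors, one gets
\[
\log W_\cdot(P_n,P_n') - \log W_\cdot(P_{\alpha n},P_{\alpha n'}) = \frac{\log \alpha}{d} + O(\epsilon),
\]
so $\widetilde d_n, d_n \in d \cdot (1+O(\epsilon))^{\pm 1}$. Choosing the constants in the lower bound on $N$ so that $O(\epsilon) \le 4\beta$ (after inversion) yields the stated window $d/(1+4\beta) \le \widetilde d_n \le (1+4\beta)d$. The only genuinely delicate step is the two-sided sharp rate in the first paragraph; the rest is careful bookkeeping of approximation errors and a concentration union bound.
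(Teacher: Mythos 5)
Your overall architecture matches the paper's: establish two-sided rate bounds $W_1^M(P_m,P_m') \asymp m^{-1/d}$ (upper bound via entropy/chaining, lower bound via covering), concentrate about the mean, transfer from $d_M$ to $d_G$ via a kNN-graph distortion bound, then take logs and bookkeep. Two substantive points of comparison, one of which is a genuine gap.

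On concentration, you use McDiarmid with bounded differences $\diam(M)/m$; the paper instead assumes a $T_2$-transportation inequality for $\pp$ and shows $W_1(P_n,P_n')$ is $2/\sqrt{n}$-Lipschitz on the product space, invoking Gozlan's equivalence of $T_2$ with dimension-free Lipschitz concentration (Lemma \ref{lem:w1concentration}). Both yield $m^{-1/2}$-scale fluctuations and both are fine here since $M$ is compact (so $T_2$ holds automatically); your route is actually more elementary and sidesteps the $T_2$ hypothesis in the formal statement. Also, your lower bound sketch (``any coupling must move mass a comparable distance on average'') is looser than what is actually run: the paper proves an \emph{almost-sure} lower bound on $W_1^M(P_n,\pp)$ (Proposition \ref{prop:w1lower}, following Weed--Bach: too few sample points to cover half the mass of $\pp$ at scale $\epsilon$ forces $W_1 \gtrsim \epsilon$), then passes to $W_1^M(P_n,P_n')$ via $\ee W_1^M(P_n,\pp)\le\ee W_1^M(P_n,P_n')$ and concentration; a direct lower bound on the two-sample quantity is trickier than your sketch suggests.

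The genuine gap is in your treatment of $d_n$. You propose deducing $W_1(P_m,P_m') = W_1^M(P_m,P_m')(1+O(\epsilon))$ from the pointwise bound $\|x-y\| = d_M(x,y)(1+O(d_M(x,y)^2/\tau^2))$ ``since the typical transport distance is $m^{-1/d}\ll\tau$.'' This does not follow: the optimal $W_1$-coupling is not guaranteed to have its support near the diagonal, and a bottleneck in $M$ can make the Euclidean-optimal coupling match pairs that are $d_M$-far. Markov's inequality only controls the fraction of mass moved a long Euclidean distance, which gives an additive error $\delta\cdot\diam(M)$, not a multiplicative $1+O(\epsilon)$. The paper avoids this entirely: it never compares $W_1$ with $W_1^M$, but instead reruns the identical covering-entropy argument with the \emph{ambient} covering numbers of $M$ (which are two-sided for $\epsilon<\tau$, Corollary \ref{cor:ambientcovering}), so the rate $n^{-1/d}$ for $W_1(P_n,P_n')$ is established directly in the range $n\gtrsim\tau^{-d}$. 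You should adopt that route for $d_n$ rather than trying to bootstrap off $W_1^M$.
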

Although the guarantees for $d_n$ and $\widetilde{d_n}$ are similar, empirically $\widetilde{d}_n$ is much better, as explained below.  Note that the ambient dimension $D$ never enters the statistical complexity given above.  While the exponential dependence on the intrinsic dimension $d$ is unfortunate, it is likely necessary as described below.

While the reach, $\tau$, determines the sample complexity of our dimension estimator, consideration of the injectivity radius, $\iota$, is relevant for practical application.  Both geometric quantities are defined formally in the following section, but, to understand the intuition, note that, as discussed above, there are two natural metrics we could be placing on $M = \supp \pp$, the Euclidean metric and the geodesic distance.  The reach is, intuitively, the size of the largest ball with respect to the ambient metric such that we can treat points in $M$ as if they were simply in Euclidean space; the injectivity radius is similar, except it treats neighborhoods with respect to the intrinsic metric.  Considering that manifold distances are always at least as large as Euclidean distances, it is unsurprising that $\tau \lesssim \iota$.  Getting back to dimension estimation, specializing to the case of $\beta = p = 1$, and recalling \eqref{eq:d_n_estims}, there are now two choices for our dimension estimator: we could use Wasserstein distance with respect to the Euclidean metric or Wasserstein distance with respect to the intrinsic metric (which we will denote by $W_1^M$).  We will see that if $\iota \approx \tau$, then the two estimators induced by each of these distances behave similarly, but when $\iota \gg \tau$, the latter is better.  While we wish to use $W_1^M(P_n, P_n')$ to estimate the dimension, we do not know the intrinsic metric.  As such, we use the $k$NN graph to approximate this intrinsic metric and introduce the measure $W_1^G(P_n, P_n')$.  Note that if we had oracle access to geodesic distance $d_M$, then the $W_1^M$-based estimator $\widetilde{d}_n$ would only require $\asymp \iota^{-d}$ samples. However, our $k$NN estimator of $d_M$,  unfortunately, still requires the $\tau^{-d}$ samples.  Nevertheless, there is a practical advantage of $\widetilde d_n$ in that the metric estimator can leverage all $N = 2(1 + \alpha) n$ available samples, so that $\widetilde d_n$ works if $N \gtrsim \tau^{-d}$ and only $ n \gtrsim \iota^{-d}$, whereas for $d_n$ we require $n \gtrsim \tau^{-d}$ itself.

A natural question: is this more complicated approach necessary? i.e., is $\iota \gg \tau$ on real datasets?  We believe that the answer is yes.  To see this, consider the case of images of the digit 7 (for example) from MNIST \citep{MNIST}.  As a demonstration, we sample images from MNIST in datasets of size ranging in powers of 2 from $32$ to $2048$, calculate the Wasserstein distance between these two samples, and plot the resulting trend.  In the right plot, we pool all of the data to estimate the manifold distances, and then use these estimated distances to compute the Wasserstein distance between the empirical distributions.  In order to better compare these two approaches, we also plot the residuals to the linear fit that we expect in the asymptotic regime.  Looking at Figure \ref{fig1}, it is clear that we are not yet in the asymptotic regime if we simply use Euclidean distances; on the other hand, the trend using the manifold distances is much more clearly linear, suggesting that the slope of the best linear fit is meaningful.  Thus we see that in order to get a meaningful dimension estimate from practical data sets, we cannot simply use $W_1$ but must also estimate the geometry of the underlying distribution; this suggests that $\iota \gg \tau$ on this data manifold.  More generally, we note that the injectivity radius, $\iota$, is \emph{intrinsic} to the geometry of the manifold and thus unaffected by the imbedding; in contradistinction, the reach, $\tau$, is \emph{extrinsic} and thus can be made smaller by changing the imbedding.  In particular, when the obstruction to the reach being large is a ``bottleneck'' in the sense that the manifold is imbedded in such a way as to place distant neighborhoods of the manifold close together in Euclidean distance (see Figure \ref{fig3} for an example), we may expect $\tau \ll \iota$.  Intuitively, this matches the notion that the geometry of the data would be simple if we were to have access to the ``correct'' coordinate system and that the difficulty in understanding the geometry comes from its imbedding in the ambient space.
\begin{figure}[H]
    \centering
    \begin{subfigure}[b]{0.4\textwidth}
        \centering
        \includegraphics[width=\textwidth]{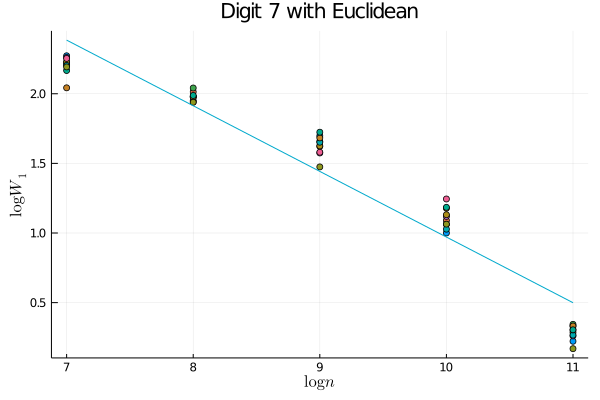}
       % \caption{Decay of $W_1$}
        \label{fig:eucliddecay}
    \end{subfigure}
    \hfill
    \begin{subfigure}[b]{0.4\textwidth}
        \centering
        \includegraphics[width=\textwidth]{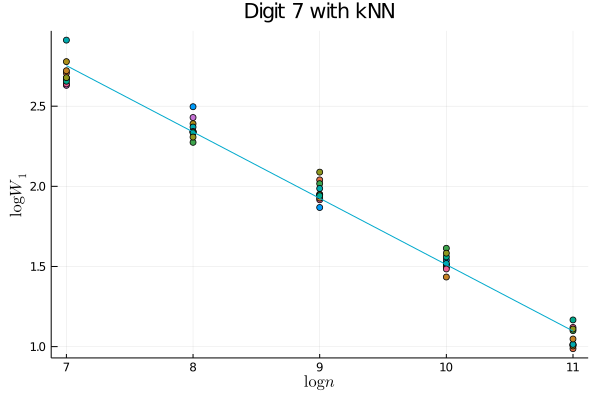}
        %\caption{Decay of $W_1^M$}
        \label{fig:mflddecay}
    \end{subfigure}
    
    \begin{subfigure}[b]{0.4\textwidth}
        \centering
        \includegraphics[width=\textwidth]{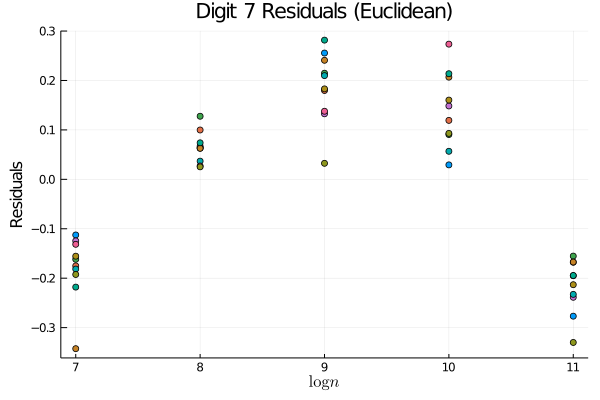}
       % \caption{Decay of $W_1$}
        \label{fig:euclidresid}
    \end{subfigure}
    \hfill
    \begin{subfigure}[b]{0.4\textwidth}
        \centering
        \includegraphics[width=\textwidth]{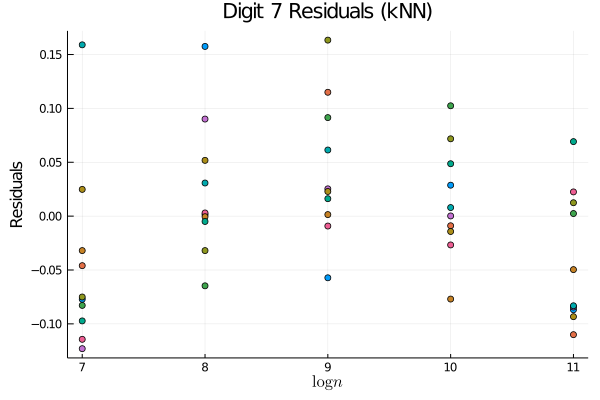}
        %\caption{Decay of $W_1^M$}
        \label{fig:mfldresid}
    \end{subfigure}
       \caption{Two $\log$-$\log$ plots of comparing how $W_1(P_n, P_n')$ decays to how $W_1^M(P_n, P_n')$ decays as $n$ gets larger, as well as the residuals from a linear fit.  The data are images of the digit $7$ from MNIST with Wasserstein distances computed with the Sinkhorn algorithm \citep{cuturi2013}.  The manifold distances are approximated by a $k$-NN graph, as described in Section \ref{sec:dimestim}.}
       \label{fig1}
\end{figure}
We emphasize that, like many estimators of intrinsic dimension, we do not claim robustness to off-manifold noise \citep{levina2004maximum,farahmand2007manifold,kim2019minimax}.  Indeed, any ``fattening'' of the manifold will force any consistent estimator of intrinsic dimension to asymptotically grow to the full, ambient dimension as the number of samples grows.  Various works have included off-manifold noise in different ways, often with the assumption that either the noise is known \citep{koltchinskii2000empirical} or the manifold is linear \citep{niles2019estimation}.  Methods that do not make these simplifying assumptions are often highly sensitive to scaling parameters that are required inputs in such methods as multi-scale, local SVD \citep{little2009multiscale}.  Extensions of our method to such noisy settings are a promising avenue of future research, particularly in understanding the effect of this noise on downstream applications as is done for Lipschitz classification in metric spaces and the resulting dimension-distortion tradeoff found in \cite{gottlieb2016adaptive}; in this work, however, we confine our theoretical study to the noiseless setting.  The primary theoretical advantage of our estimator over that of \cite{levina2004maximum,farahmand2007manifold} is that we do not require the stringent regularity assumptions for our nonasymptotic rates to hold.  We leave it for future empirical works whether this weakening of assumptions allows for a better practical estimator on real-world data sets.

Our main contributions are as follows:
\begin{itemize}
    
    \item In Section \ref{sec:dimestim}, we introduce a new estimator of intrinsic dimension.  In Theorem \ref{thm:estimator} we prove non-asymptotic bounds on the quality of the introduced estimator.  Moreover, unlike the MLE estimator of \cite{levina2004maximum} with non-asymptotic analysis in \cite{farahmand2007manifold}, minimal regularity of the density of the population distribution is required for our guarantees and, unlike that suggested in \cite{kim2019minimax}, our estimator is both computationally efficient and has sample complexity independent of the ambient dimension.
    
    \item In the course of proving Theorem \ref{thm:estimator}, we adapt the techniques of \cite{Bernstein00} to provide new, non-asymptotic bounds on the quality of kNN distance as an estimate of intrinsic distance in Proposition \ref{prop:knn}, with explicit sample complexity in terms of the reach of the underlying space.  To our knowledge, these are the first such non-asymptotic bounds.
\end{itemize}
We further note that the techniques we develop to prove the non-asymptotic bounds on our dimension estimator also serve to provide new statistical rates in learning Generative Adversarial Networks (GANs) with a H{\"o}lder discriminator class:
\begin{itemize}
    \item We prove in Theorem \ref{thm:gans} that if $\widehat{\mu}$ is a H{\"o}lder GAN, then the distance between $\widehat{\mu}$ and $\pp$, as measured by the H{\"o}lder IPM, is governed by rates dependent only on the intrinsic dimension of the data, independent of the ambient dimension or the dimension of the feature space.  In particular, we prove in great generality that if $\pp$ has intrinsic dimension $d$, then the rate of a Wasserstein GAN is $n^{- \frac 1d}$.  This improves on the recent work of \cite{schreuder2020statistical}.
\end{itemize}

The work is presented in the order of the above listed contributions, preceded by a brief section on the geometric preliminaries and prerequisite results.  We conclude the introduction by fixing notation and surveying some related work.

\paragraph{Notation:} We fix the following notation.  We always let $\pp$ be a probability distribution on $\rr^D$ and, whenever defined, we let $d = \dim \supp \pp$.  We reserve $X_1, \dots, X_n$ for samples taken from $\pp$ and we denote by $P_n$ their empirical distribution.  We reserve $\beta$ for the smoothness of a H{\"o}lder class, $\Omega \subset \rr^D$ is always a bounded open domain, and $\Delta$ is always the intrinsic diameter of a closed set.  We also reserve $M$ for a compact manifold.  In general, we denote by $\mathcal{S}$ the support of a distribution $\pp$ and we reuse $M = \supp \pp$ if we restrict ourselves to the case where $\mathcal{S} = M$ is a compact manifold, with Riemannian metric induced by the Euclidean metric.  We denote by $\vol M$ the volume of the manifold with respect to its inherited metric and we reserve $\omega_d$ for the volume of the unit ball in $\rr^d$.  When a compact manifold manifold $M$ can be assumed from context, we take the \emph{uniform} measure on $M$ to be the volume measure of $M$ normalized so that $M$ has unit measure.

\subsection{Related Work}

\paragraph{Dimension Estimation} There is a long history of dimension estimation, beginning with linear methods such as thresholding principal components \citep{fukunaga1971algorithm}, regressing k-Nearest-Neighbors (kNN) distances \citep{pettis1979intrinsic}, estimating packing numbers \citep{kegl2002intrinsic,grassberger2004measuring,camastra2002estimating}, an estimator based solely on neighborhood (but not metric) information that was recently proven consistent \citep{kleindessner2015dimensionality}, and many others.  An exhaustive recent survey on the history of these techniques can be found in \cite{camastra2016intrinsic}.  Perhaps the most popular choice among current practitioners is the MLE estimator of \cite{levina2004maximum}.

The MLE estimator is constructed as the maximum likelihood of a parameterized Poisson process.  As worked out in \cite{levina2004maximum}, a local estimate of dimension for $k \geq 2$ and $x \in \rr^D$ is given by
\begin{equation*}
    \widehat{m}_k(x) = \left(\frac 1{k-1} \sum_{j = 1}^k \log \frac{T_k(x)}{T_j(x)}\right)^{-1}
\end{equation*}
where $T_j(x)$ is the distance between $x$ and its $j^{th}$ nearest neighbor in the data set.  The final estimate for fixed $k$ is given by averaging $\widehat{m}_k$ over the data points in order to reduce variance.  While not included in the original paper, a similar motivation for such an estimator could be noting that if $X$ is uniformly distributed on a ball of radius $R$ in $\rr^{d}$, then $\ee\left[\log \frac R{\norm{X}}\right] = \frac 1d$; the local estimator $\widehat{m}_k(x)$ is the empirical version under the assumption that the density is smooth enough to be approximately constant on this small ball.  The easy computation is included for the sake of completeness in Appendix \ref{app:miscellany}.  In \cite{farahmand2007manifold}, the authors examined a closely related estimator and provided non-asymptotic guarantees with an exponential dependence on the intrinsic dimension, albeit with stringent regularity conditions on the density.

In addition to the estimators motivated by the volume growth of local balls discussed in the previous paragraph, \cite{kim2019minimax} proposed and analyzed a dimension estimator based on Travelling Salesman Paths (TSP).  One major advantage to the TSP estimator is the lack of necessary regularity conditions on the density, requiring only an upper bound of the likelihood of the population density with respect to the volume measure on the manifold.  On the other hand, the upper bound on sample complexity that that paper presents depends exponentially on the ambient dimension, which is pessimistic when the intrinsic dimension is substantially smaller.  In addition, it is unclear how practical the estimator is due to the necessity of computing a solution to TSP; even ignoring this issue, \cite{kim2019minimax} note that practical tuning of the constants involved in their estimator is difficult and thus deploying their estimator as is on real-world datasets is unlikely.

\paragraph{Manifold Learning}  The notion of reach was first introduced in \cite{federer1959curvature}, and subsequently used in the machine learning and computational geometry communities in such works as \cite{niyogi2008finding,niyogi2011topological,aamari2019estimating,amenta1999surface,fefferman2016testing,fefferman2018fitting,narayanan2010sample,efimov2019adaptive,boissonnat2019reach}.  Perhaps most relevant to our work, \cite{narayanan2010sample,fefferman2016testing} consider the problem of testing membership in a class of manifolds of large reach and derive tight bounds on the sample complexity of this question.  Our work does not fall into the purview of their conclusions as we assume that the geometry of the underlying manifold is nice and estimate the intrinsic dimension.  In the course of proving bounds on our dimension estimator, we must estimate the intrinsic metric of the data.  We adapt the proofs of \cite{tenenbaum2000,Bernstein00,niyogi2008finding} and provide tight bounds on the quality of a $k$-Nearest Neighbors ($k$NN) approximation of the intrinsic distance.

\paragraph{Statistical Rates of GANs}  
Since the introduction of Generative Adversarial Networks (GANs) in \cite{goodfellow2014generative}, there has been a plethora of empirical improvements and theoretical analyses.  Recall that the basic GAN problem selects an estimated distribution $\widehat{\mu}$ from a class of distributions $\mathcal{P}$ minimizing some adversarially learned distance between $\widehat{\mu}$ and the empirical distribution $P_n$.  Theoretical analyses aim to control the distance between the learned distribution $\widehat{\mu}$ and the population distribution $\pp$ from which the data comprising $P_n$ are sampled.  In particular statistical rates for a number of interesting discriminator classes have been proven including Besov balls \citep{uppal2019nonparametric}, balls in an RKHS \citep{liang2018HowWell}, and neural network classes \citep{chen2020statistical} among others.  The latter paper, \cite{chen2020statistical} also considers GANs where the discriminative class is a H{\"o}lder ball, which includes the popular Wasserstein GAN framework of \cite{arjovsky2017wasserstein}.  They show that if $\widehat{\mu}$ is the empirical minimizer of the GAN loss and the population distribution $\pp \ll \mathsf{Leb}_{\rr^D}$ then
\begin{equation*}
    \ee\left[d_\beta(\widehat{\mu}, \pp)\right] ~\lesssim~ n^{- \frac{\beta}{2 \beta + D}}
\end{equation*}
up to factors polynomial in $\log n$.  Thus, in order to beat the curse of dimensionality, one requires $\beta = \Omega(D)$; note that the larger $\beta$ is, the weaker the IPM is as the H{\"o}lder ball becomes smaller.  In order to mitigate this slow rate, \cite{schreuder2020statistical} assume that both $\mathcal{P}$ and $\pp$ are distributions arising from Lipschitz pushforwards of the uniform distribution on a $d$-dimensional hypercube; in this setting, they are able to remove dependence on $D$ and show that
\begin{equation*}
    \ee\left[ d_\beta(\widehat{\mu}, \pp)\right] ~\lesssim~ L n^{- \frac{\beta}{d}} \vee n^{- \frac 12}.
\end{equation*}
This last result beats the curse of dimensionality, but pays with restrictive assumptions on the generative model as well as dependence on the Lipschitz constant of the pushforward map.  More importantly, the result depends exponentially not on the intrinsic dimension of $\pp$ but rather on the dimension of the feature space used to represent $\pp$.  In practice, state-of-the-art GANs used to produce images often choose $d$ to be on the order of $128$, which is much too large for the \cite{schreuder2020statistical} result to guarantee good performance.

% \section{Temp Intro}
% \input{Intro}

\section{Preliminaries}\label{sec:covering}
\subsection{Geometry}

In this work, we are primarily concerned with the case of compact manifolds isometrically imbedded in some large ambient space, $\rr^D$.  We note that this focus is largely in order to maintain simplicity of notation and exposition; extensions to more complicated, less regular sets with intrinsic dimension defined as the Minkowski dimension can easily be attained with our techniques.  The key example to keep in mind is that of image data, where each pixel corresponds to a dimension in the ambient space, but, in reality, the distribution lives on a much smaller, imbedded subspace.  Many of our results can be easily extended to the non-compact case with additional assumptions on the geometry of the space and tails of the distribution of interest.

Central to our study is the analysis of how complex the support of a distribution is.  We measure complexity of a metric space by its entropy:
\begin{definition}
    Let $(X, d)$ be a metric space.  The covering number at scale $\epsilon > 0$, $N(X, d, \epsilon)$, is the minimal number $s$ such that there exist points $x_1, \dots, x_s$ such that $X$ is contained in the union of balls of radius $\epsilon$ centred at the $x_i$.  The packing number at scale $\epsilon > 0$, $D(X, d, \epsilon)$, is the maximal number $s$ such that there exist points $x_1, \dots, x_s \in X$ such that $d(x_i, x_j) > \epsilon$ for all $i \neq j$.  The entropy is defined as $\log N(X, d, \epsilon)$.
\end{definition}
We recall the classical packing-covering duality, proved, for example, in \cite[Lemma 5.12]{vanHandel2014}:
\begin{lemma}\label{lem:duality}
  For any metric space $X$ and scale $\epsilon > 0$,
  \begin{equation*}
      D(X, d, 2\epsilon) \leq N(X, d, \epsilon) \leq D(X, d, \epsilon).
  \end{equation*}
\end{lemma}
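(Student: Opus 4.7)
The plan is to prove each inequality separately via a short extremal argument: maximality for the right-hand bound, and the triangle inequality applied to an optimal cover for the left-hand bound. Since this is a classical duality, I do not anticipate any real obstacle; the only care needed is to keep the strict versus non-strict inequalities in the definitions of $N$ and $D$ consistent.

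For the right-hand inequality $N(X, d, \epsilon) \leq D(X, d, \epsilon)$, I would fix a maximal $\epsilon$-packing $\{x_1, \ldots, x_s\}$ with $s = D(X, d, \epsilon)$, so that $d(x_i, x_j) > \epsilon$ whenever $i \neq j$. Maximality forces every $y \in X$ to satisfy $d(y, x_i) \leq \epsilon$ for at least one $i$, since otherwise $\{x_1, \ldots, x_s, y\}$ would be a strictly larger $\epsilon$-separated set, contradicting the definition of $D(X, d, \epsilon)$. Hence the balls of radius $\epsilon$ centered at the $x_i$ cover $X$, which immediately yields $N(X, d, \epsilon) \leq s$.

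For the left-hand inequality $D(X, d, 2\epsilon) \leq N(X, d, \epsilon)$, I would choose a $2\epsilon$-separated set $\{x_1, \ldots, x_s\}$ realizing $s = D(X, d, 2\epsilon)$ together with an optimal $\epsilon$-cover $\{y_1, \ldots, y_N\}$ with $N = N(X, d, \epsilon)$. Define an assignment $j : \{1, \ldots, s\} \to \{1, \ldots, N\}$ by choosing, for each $i$, an index $j(i)$ with $d(x_i, y_{j(i)}) \leq \epsilon$ (which exists because the $y_k$ form a cover). If $j(i) = j(i')$ for some $i \neq i'$, the triangle inequality gives
\begin{equation*}
    d(x_i, x_{i'}) \leq d(x_i, y_{j(i)}) + d(y_{j(i')}, x_{i'}) \leq 2\epsilon,
\end{equation*}
contradicting the $2\epsilon$-separation $d(x_i, x_{i'}) > 2\epsilon$. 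So $j$ is injective and therefore $s \leq N$, completing the proof.
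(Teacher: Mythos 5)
Your proof is correct, and both halves are the standard extremal and pigeonhole arguments for packing-covering duality. The paper does not give its own proof but simply cites \cite[Lemma 5.12]{vanHandel2014}, which uses essentially the same argument you have written out.
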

The most important geometric quantity that determines the complexity of a problem is the dimension of the support of the population distribution.  There are many, often equivalent ways to define this quantity in general.  One possibility, introduced in \cite{assouad1983plongements} and subsequently used in \cite{dasgupta2008random,kpotufe2012tree,kpotufe2013adaptivity} is that of doubling dimension:
\begin{definition}
    Let $\cs \subset \rr^D$ be a closed set.  For $x \in \cs$, the doubling dimension at $x$ is the smallest $d$ such that for all $r > 0$, the set $B_r(x) \cap \cs$ can be covered by $2^d$ balls of radius $\frac r2$, where $B_r(x)$ denotes the Euclidean ball of radius $r$ centred at $x$.  The doubling dimension of $\cs$ is the supremum of the doubling dimension at $x$ for all $x \in \cs$.
\end{definition}
This notion of dimension plays well with the entropy, as demonstrated by the following \cite[Lemma 6]{kpotufe2012tree}:
\begin{lemma}[\citep{kpotufe2012tree}] \label{lem:coveringdim}
  Let $\cs$ have doubling dimension $d$ and diameter $\Delta$.  Then $N(\cs, \epsilon) \leq \left(\frac \Delta\epsilon\right)^d$.
\end{lemma}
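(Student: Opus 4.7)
The plan is to iterate the doubling property $\lceil \log_2(\Delta/\epsilon) \rceil$ times. Fix any basepoint $x_0 \in \mathcal{S}$; since $\diam \mathcal{S} \leq \Delta$, one has $\mathcal{S} \subseteq B_\Delta(x_0)$. I will show by induction that for every integer $k \geq 0$ there is a cover of $\mathcal{S}$ by at most $2^{kd}$ balls of radius $\Delta/2^k$ whose centers all lie in $\mathcal{S}$. The base case $k = 0$ is just $B_\Delta(x_0) \supseteq \mathcal{S}$.

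For the inductive step, suppose $y_1, \dots, y_N \in \mathcal{S}$ with $N \leq 2^{kd}$ satisfy $\mathcal{S} \subseteq \bigcup_i B_{\Delta/2^k}(y_i)$. Applying the doubling property at each $y_i$ with radius $r = \Delta/2^k$ produces, for each $i$, at most $2^d$ balls of radius $\Delta/2^{k+1}$ whose union covers $B_{\Delta/2^k}(y_i) \cap \mathcal{S}$. Taking the union over all $i$ gives a cover of $\mathcal{S}$ by at most $N \cdot 2^d \leq 2^{(k+1)d}$ balls of radius $\Delta/2^{k+1}$, completing the induction. (For any ball in this family whose intersection with $\mathcal{S}$ is nonempty but whose center lies outside $\mathcal{S}$, replace it with a ball of the same radius centered at a point of that intersection; this enlarges each ball by at most a factor of two in radius, a constant that is absorbed into the final bound.)

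Now pick $k$ minimal so that $\Delta/2^k \leq \epsilon$, namely $k = \lceil \log_2(\Delta/\epsilon) \rceil$. The inductive bound yields $N(\mathcal{S}, \epsilon) \leq 2^{kd} \leq (\Delta/\epsilon)^d$, as stated (up to the implicit constant that the paper suppresses).

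The one subtle point is the center-in-$\mathcal{S}$ issue in the inductive step: as written, the doubling hypothesis only controls coverings of $B_r(x) \cap \mathcal{S}$ when $x \in \mathcal{S}$, so naively iterating can produce subballs whose centers need not lie in $\mathcal{S}$, at which point the doubling hypothesis no longer directly applies. The resolution is the recentering trick noted above, which is where any loss in the multiplicative constant arises; the key exponent $d$ is produced entirely by the $k$-fold product $(2^d)^k$ from the iterated doubling.
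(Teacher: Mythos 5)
Your overall plan — iterate the doubling property roughly $\log_2(\Delta/\epsilon)$ times and collect a cover of size $(2^d)^k$ — is the standard argument, and the paper itself does not prove this lemma (it is cited from \cite{kpotufe2012tree}), so there is no internal proof to compare against. The difficulty you flag, that iterating can produce covering balls whose centers fall outside $\mathcal{S}$, is a legitimate reading of the paper's wording of the definition. However, the recentering fix you offer in the parenthetical does not resolve it, and as written it is internally inconsistent.

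If you replace a ball $B_\rho(z)$ with $z \notin \mathcal{S}$ by a ball of the \emph{same} radius $\rho$ centered at some $z' \in B_\rho(z) \cap \mathcal{S}$, as the parenthetical literally says, then in general $B_\rho(z) \cap \mathcal{S} \not\subseteq B_\rho(z')$, so the recentered family no longer covers $\mathcal{S}$ and the inductive hypothesis fails at the next step. To restore coverage you must take $B_{2\rho}(z') \supseteq B_\rho(z)$ — this is presumably where your ``factor of two in radius'' comes from, so the two halves of the parenthetical already contradict each other. Worse, doubling to $2\rho = \Delta/2^k$ exactly undoes the halving the doubling step produced: each round of ``halve, then recenter'' multiplies the count by $2^d$ but leaves the radius at $\Delta$. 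This is not a one-time constant that can be absorbed into the final bound; it is a per-iteration obstruction that prevents the radius from ever decreasing, so the recursion never reaches scale $\epsilon$.

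The clean resolution is to read the doubling definition in the convention that is standard since Assouad (and is the one in force in \cite{kpotufe2012tree}): the $2^d$ covering balls are themselves centered at points of $\mathcal{S}$, equivalently one treats $(\mathcal{S}, \norm{\cdot})$ as a metric space in its own right so that every ball is an $\mathcal{S}$-ball. Under that reading your induction closes exactly as you first wrote it, with no recentering step needed, and the only slack is the harmless factor $2^d$ coming from the ceiling in $k = \lceil \log_2(\Delta/\epsilon) \rceil$, which you already acknowledge. I would delete the parenthetical and replace it with a one-line remark that the covering balls produced by the doubling property are taken to be centered in $\mathcal{S}$, so that the induction hypothesis is preserved.
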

We remark that a similar notion of dimension is that of the \emph{Minkowski dimension}, which is defined as the asymptotic rate of growth of the entropy as the scale tends to zero.  Recently, \cite{nakada2020adaptive} examined the effect that an assumption of small Minkowski dimension has on learning with neural networks; their central statistical result can be recovered as an immediate consequence of our complexity bounds below.

In order to develop non-asymptotic bounds, we need some understanding of the geometry of the support, $M$.  We first recall the definition of the geodesic distance:
\begin{definition}
    Let $\cs \subset \rr^D$ be closed.  A piecewise smooth curve in $\cs$, $\gamma$, is a continuous function $\gamma: I \to \cs$, where $I \subset \rr$ is an interval, such that there exists a partition $I_1, \cdots, I_J$ of $I$ such that $\gamma_{I_j}$ is smooth as a function to $\rr^D$.  The length of $\gamma$ is induced by the imbedding of $\cs \subset \rr^D$.  For points $p, q \in \cs$, the intrinsic (or geodesic) distance is
    \begin{equation*}
        d_\cs(p, q) = \inf\left\{ \mathsf{length}\,(\gamma)| \gamma(0) = p \text{ and } \gamma(1) = q \text{ and } \gamma \text{ is a piecewise smooth curve in } \cs\right\}.
    \end{equation*}
\end{definition}
It is clear from the fact that straight lines are geodesics in $\rr^D$ that for any points $p, q \in \cs$, $\norm{p - q} \leq d_\cs(p,q)$.  We are concerned with two relevant geometric quantities, one extrinsic and the other intrinsic.
\begin{definition}
    Let $\cs \subset \rr^D$ be a closed set.  Let the medial axis $\med(\cs)$ be defined as
    \begin{equation*}
        \med(\cs) = \left\{x \in \rr^D | \text{ there exist } p \neq q \in \cs \text{ such that } \norm{p - x} = \norm{q - x} = d(x, \cs) \right\}.
    \end{equation*}
    In other words, the medial axis is the set of points in $\rr^D$ that have at  least two projections to $\cs$.  Define the reach, $\tau_\cs$ of $\cs$ as $d(\cs, \med(\cs))$, the minimal distance between a set and its medial axis.
    
    If $\cs = M$ is a compact manifold with the induced Euclidean metric, we define the injectivity radius $\iota = \iota_M$ as the maximal $r$ such that if $p, q \in M$ such that $d_M(p, q) < r$ then there exists a unique length-minimizing geodesic connecting $p$ to $q$ in $M$.
\end{definition}
For more detail on the injectivity radius, see \cite{lee2013smooth}, especially Chapters 6 and 10.  The difference between $\iota_M$ and $\tau_M$ is in the choice of metric with which we equip $M$.  We could choose to equip $M$ with the metric induced by the Euclidean distance $\norm{\cdot}$ or we could choose to use the intrinsic metric $d_M$ defined above.  The reach quantifies the maximal radius of a ball with respect to the \emph{Euclidean} distance such that the intersection of this ball with $M$ behaves roughly like Euclidean space.  The injectivity radius, meanwhile, quantifies the maximal radius of a ball with respect to the \emph{intrinsic} distance such that this ball looks like Euclidean space.  While neither quantity is necessary for our dimension estimator, both figure heavily in the analysis.  The final relevant geometric quantity is the sectional curvature.  The sectional curvature of $M$ at a point $p \in M$ given two directions tangent to $M$ at $p$ is given by the Gaussian curvature at $p$ of the image of the exponential map applied to a small neighborhood of the origin in the plane determined by the two directions.  Intuitively, the sectional curvature measures how tightly wound the manifold is locally around each point.  For an excellent exposition on the topic, see \cite[Chapter 8]{lee2013smooth}.

We now specialize to consider compact, dimension $d$ manifolds $M$ imbedded in $\rr^D$ with the induced metric (see \cite{lee2013smooth} for an accessible introduction to the geometric notions discussed here).  One measure of size of the manifold $M$ is the diameter, $\Delta$, with respect to the intrinsic distance defined above.  Another notion of size is the volume measure, $\vol_M$.  This measure can be defined intrinsically as integration with respect to the volume form, where the volume form can be thought of as the analogue of the Lebesgue differential in standard Euclidean space; for more details see \cite{lee2013smooth}.  In our setting, we could equivalently define the volume as the $d$-dimensional Hausdorff measure as in \cite{aamari2019estimating}.  Either way, when we refer to a measure $\mu_M$ that is uniform on the manifold, we consider the normalization such that $\mu_M(M) = 1$, i.e., $\mu_M(\cdot) = \vol_M(\cdot) / \vol(M)$.  

With the brief digression into volume concluded, we return to the notion of the reach, which encodes a number of local and global geometric properties.  We summarize several of these in the following proposition:
\begin{proposition}\label{prop:mfldgeometry}
 Let $M \subset \rr^D$ be a compact manifold isometrically imbedded in $\rr^D$.  Suppose that $\tau = \tau_M > 0$.  The following hold:
 \begin{enumerate}[(a)]
     \item \cite[Proposition 6.1]{niyogi2008finding}] The norm of the second fundamental form of $M$ is bounded by $\frac 1{\tau}$ at all points $p \in M$.
     
     \item \cite[Proposition A.1 (ii)]{aamari2019estimating} The injectivity radius of $M$ is at least $\pi \tau$.
     
     \item \cite[Lemma 3]{boissonnat2019reach} If $p, q \in M$ such that $\norm{p - q} \leq 2 \tau$ then $d_M(p, q) \leq 2 \tau \arcsin\left(\frac{\norm{p - q}}{2 \tau}\right)$.
     
 \end{enumerate}
\end{proposition}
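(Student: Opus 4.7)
The three parts collect standard facts about submanifolds of bounded reach, each with its own attribution; I sketch the approach I would take to each.

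For (a), I would use the osculating sphere argument. Fix $p \in M$ and a unit vector $v \in T_pM$, and let $\gamma$ be a unit-speed $C^2$ curve in $M$ with $\gamma(0) = p$ and $\gamma'(0) = v$. The normal component of $\gamma''(0)$ is exactly $II(v,v)$. Consider the normal ray $c(s) = p + s\, n$ with $n = II(v,v)/\|II(v,v)\|$. A second-order Taylor expansion gives $\|c(s) - \gamma(t)\|^2 = s^2 + t^2(1 - s\|II(v,v)\|) + O(t^3)$. Thus for $s > 1/\|II(v,v)\|$, the distance from $c(s)$ to $M$ strictly decreases as $t$ leaves $0$, so $p$ is not a nearest point on $M$ to $c(s)$. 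A continuity/intermediate-value argument then shows that at the critical value $s^* = 1/\|II(v,v)\|$, $c(s^*)$ has at least two nearest points in $M$ and hence lies in $\med(M)$. This forces $\tau \leq 1/\|II(v,v)\|$, i.e.\ $\|II\| \leq 1/\tau$.

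For (b), I would invoke Klingenberg's classical injectivity radius estimate $\iota \geq \min\left(\pi/\sqrt{K_{\max}},\, \ell_{\min}/2\right)$, where $K_{\max}$ is the supremum of sectional curvatures and $\ell_{\min}$ the length of the shortest closed geodesic. Gauss's equation $K(X,Y) = \langle II(X,X), II(Y,Y)\rangle - \|II(X,Y)\|^2$ together with (a) gives $K_{\max} \leq 1/\tau^2$, so $\pi/\sqrt{K_{\max}} \geq \pi\tau$. For the closed-geodesic term, any closed geodesic of $M$ is a closed $C^2$ curve in $\rr^D$ whose extrinsic curvature equals the norm of its normal acceleration and is thus bounded by $1/\tau$; Fenchel's theorem (total curvature $\geq 2\pi$) then forces length at least $2\pi\tau$, so $\ell_{\min}/2 \geq \pi\tau$. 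Combining the two bounds yields $\iota \geq \pi\tau$.

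For (c), which I expect to be the main technical step, the plan is to compare the minimizing geodesic from $p$ to $q$ against a circular arc of radius $\tau$. By (b), any two points with intrinsic distance below $\pi\tau$ are joined by a unique minimizing geodesic $\gamma$, and by (a) this $\gamma$ has extrinsic curvature at most $1/\tau$ in $\rr^D$. The key chord-arc inequality asserts that a unit-speed curve in $\rr^D$ of curvature $\leq 1/\tau$ and length $L \leq \pi\tau$ satisfies $\|\gamma(L) - \gamma(0)\| \geq 2\tau \sin(L/(2\tau))$, with equality for a planar circular arc of radius $\tau$; this can be proved by bounding $\langle \gamma'(0), \gamma'(t)\rangle$ via the Frenet--Serret relations and integrating twice. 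Inverting yields $d_M(p,q) = L \leq 2\tau \arcsin(\|p-q\|/(2\tau))$ whenever $\|p-q\| \leq 2\tau$. The delicate point is the a priori bound $d_M(p,q) \leq \pi\tau$ needed to legitimize the inversion of $\sin$; I would secure it by a continuity/bootstrap argument, starting in a tubular neighborhood where $d_M$ and Euclidean distance are comparable (via the normal exponential map and the reach bound) and extending to the full range $\|p-q\| \leq 2\tau$.
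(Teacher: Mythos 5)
The paper itself offers no proof of \Cref{prop:mfldgeometry}: all three parts are stated as imported facts, with (a) cited to Niyogi--Smale--Weinberger, (b) to Aamari et al., and (c) to Boissonnat et al. So there is no in-paper argument to compare against; I am assessing your sketches on their own terms.

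Your sketches for (a) and (b) are correct and are essentially the standard proofs from those references. For (a), the second-order expansion $\|c(s)-\gamma(t)\|^2 = s^2 + t^2\bigl(1 - s\|II(v,v)\|\bigr) + O(t^3)$ is right, and the conclusion that $\tau \leq 1/\|II(v,v)\|$ follows once you observe that the nearest-point projection $\pi_M$ is continuous (indeed Lipschitz) on the open $\tau$-tube, so that a change of nearest point as $s$ crosses $s^* = 1/\|II(v,v)\|$ forces $c(s^*)$ (or points arbitrarily near it) into $\med(M)$ at distance $\leq s^*$ from $M$. For (b), Klingenberg's estimate $\iota \geq \min\bigl(\pi/\sqrt{K_{\max}},\, \ell_{\min}/2\bigr)$ together with the Gauss equation and (a), and Fenchel's theorem for the shortest closed geodesic, is exactly the route taken in the literature; compactness gives completeness via Hopf--Rinow, so Klingenberg applies and there is no circularity.

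Part (c) is where there is a genuine gap, which you yourself flag. Your chord-arc lemma (Schur's comparison against a circular arc of radius $\tau$) is correct for unit-speed curves with extrinsic curvature $\leq 1/\tau$ and length $L \leq \pi\tau$, and minimizing geodesics in $M$ do have this curvature bound by (a). But the inversion to $d_M(p,q) \leq 2\tau\arcsin(\|p-q\|/(2\tau))$ requires knowing in advance that $L = d_M(p,q) \leq \pi\tau$, and this is not a formal consequence of either (a) or (b): (b) only controls the injectivity radius in the intrinsic metric and says nothing about which pairs with small \emph{Euclidean} separation have small \emph{intrinsic} separation. Your proposed bootstrap does not obviously close the gap. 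Tracking $\|\,p-\gamma(t)\|$ along the minimizing geodesic and applying chord-arc shows $\|p-\gamma(\pi\tau)\| \geq 2\tau$ if $L > \pi\tau$, but nothing prevents that quantity from dipping back below $2\tau$ at $t = L$; and the nearest-point-projection comparison you allude to (Federer's Lipschitz bound $\tau/(\tau - r)$ for the projection at distance $r$) only yields $d_M(p,q) < \pi\tau$ for $\|p-q\|$ up to roughly $1.2\tau$, not the full range $\|p-q\| \leq 2\tau$. Establishing the a priori bound for the full range is precisely the substantive content of \cite[Lemma 3]{boissonnat2019reach}, and your sketch would need that result, or a genuine replacement for it, to be complete.
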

A few remarks are in order.  First, note that the Hopf-Rinow Theorem \citep{hopf1931begriff} guarantees that $M$ is complete, which is fortuitous as completeness is a necessary, technical requirement for several of our arguments.  Second, we note that (c) from Proposition \ref{prop:mfldgeometry} has a simple geometric interpretation: the upper bound on the right hand side is the length of the arc of a circle of radius $\tau$ containing points $p,q$; thus, the maximal distortion of the intrinsic metric with respect to the ambient metric is bounded by the circle of radius $\tau$.

Point (a) in the above proposition demonstrates that control of the reach leads to control of local distortion.  From the definition, it is obvious that the reach provides an upper bound for the size of the global notion of a ``bottleneck," i.e., two points $p, q \in M$ such that $\norm{p - q} = 2 \tau < d_M(p, q)$.  Interestingly, these two local and global notions of distortion are the only ways that the reach of a manifold can be small, as \cite[Theorem 3.4]{aamari2019estimating} tells us that if the reach of a manifold $M$ is $\tau$, then either there exists a bottleneck of size $2 \tau$ or the norm of the second fundamental form is $\frac 1\tau$ at some point.  Thus, in some sense, the reach is the ``correct" measure of distortion.  Note that while (b) above tells us that $\iota_M \gtrsim \tau_M$, there is no comparable upper bound.  To see this, consider Figure \ref{fig3}, which depicts a one-dimensional manifold imbedded in $\rr^2$.  Note that the bottleneck in the center ensures that the reach of this manifold is very small; on the other hand, it is easy to see that the injectivity radius is given by half the length of the entire curve.  As the curve can be extended arbitrarily, the reach can be arbitrarily small relative to the injectivity radius.
\begin{figure}
    \centering
    \includegraphics[width=.5\textwidth]{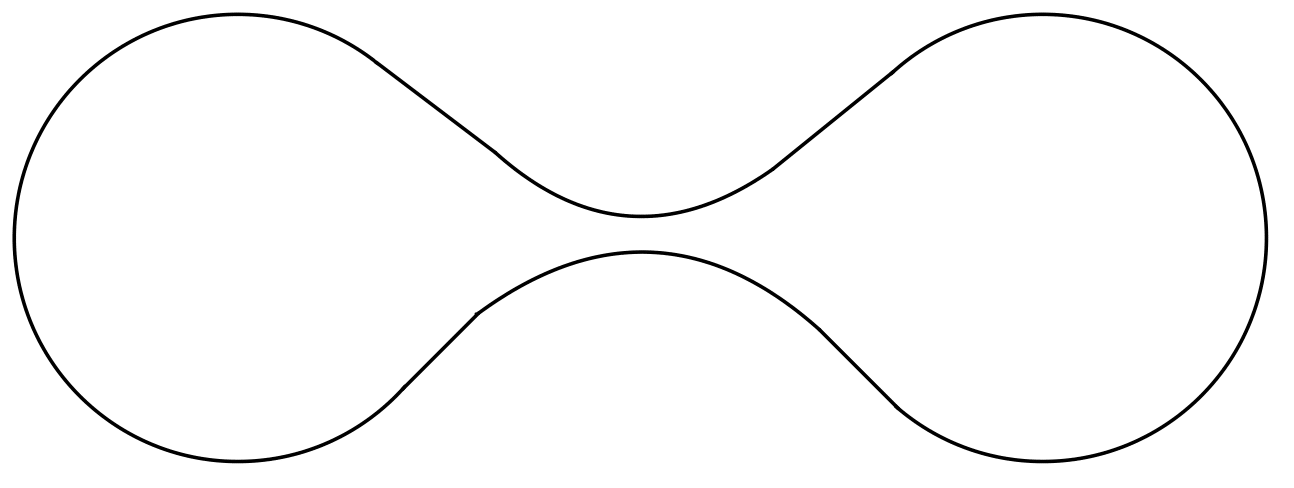}
    \caption{Curve in $\rr^2$ where $\tau \ll \iota$.}
    \label{fig3}
\end{figure}

We now proceed to bound the covering number of a compact manifold using the dimension and the injectivity radius.  We note that upper bounds on the covering number with respect to the ambient metric were provided in \cite{niyogi2008finding,narayanan2010sample}.  A similar bound with less explicit constants can be found in \cite[Lemma 4]{kim2019minimax}.
\begin{proposition}\label{prop:mfldcovering}
 Let $M \subset \rr^D$ be an isometrically imbedded, compact, $d$-dimensional submanifold with injectivity radius $\iota > 0$ such that the sectional curvatures are bounded above by $\kappa_1 \geq 0$ and below by $\kappa_2 \leq 0$.  If $\epsilon < \frac{\pi}{2 \sqrt{k_1}} \wedge \iota$ then
 \begin{equation*}
     N(M, d_M, \epsilon) \leq \frac{\vol M}{\omega_d} d \left(\frac \pi 2\right)^d \epsilon^{-d}.
 \end{equation*}
 If $\epsilon < \frac{1}{\sqrt{-\kappa_2}} \wedge \iota$ then
 \begin{equation*}
     \frac{\vol M}{\omega_d} d 8^{-d}  \epsilon^{-d} \leq D(M, d_M, 2\epsilon).
 \end{equation*}
 Moreover, for all $\epsilon < \iota$,
 \begin{equation*}
     \frac{\vol M}{\omega_d} d \iota^d (- \kappa_2)^{\frac d2} e^{- d \iota \sqrt{- \kappa_2}} \epsilon^{-d} \leq D(M, d_M, \epsilon).
 \end{equation*}
 Thus, if $\epsilon < \tau$, where $\tau$ is the reach of $M$, then
 \begin{equation*}
     \frac{\vol M}{\omega_d} d 8^{-d}  \epsilon^{-d} \leq D(M, d_M, 2\epsilon) \leq N(M, d_M, \epsilon) \leq \frac{\vol M}{\omega_d} d \left(\frac \pi 2\right)^d \epsilon^{-d}.
 \end{equation*}
\end{proposition}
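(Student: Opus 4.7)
The plan is to combine the packing--covering duality of \Cref{lem:duality} with classical Riemannian volume comparison theorems. Throughout I write $V_\kappa(r) = d\omega_d \int_0^r s_\kappa(t)^{d-1}\,dt$ for the volume of a geodesic ball of radius $r$ in the simply connected $d$-dimensional space form of constant sectional curvature $\kappa$, where $s_\kappa(t)$ equals $\sin(\sqrt{\kappa}t)/\sqrt{\kappa}$, $t$, or $\sinh(\sqrt{-\kappa}t)/\sqrt{-\kappa}$ according to the sign of $\kappa$.

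For the upper bound on $N(M, d_M, \epsilon)$, I would take a maximal $\epsilon$-packing $\{x_i\}_{i=1}^K$; by \Cref{lem:duality}, $K = D(M, d_M, \epsilon) \geq N(M, d_M, \epsilon)$, and the geodesic balls $B^M_{\epsilon/2}(x_i)$ are pairwise disjoint. The hypothesis $\epsilon < (\pi/(2\sqrt{\kappa_1})) \wedge \iota$ places $\epsilon/2$ below both the injectivity radius and the focal radius of the $\kappa_1$-model sphere, so Günther's comparison theorem yields $\vol B^M_{\epsilon/2}(x_i) \geq V_{\kappa_1}(\epsilon/2)$. Summing and using $\sin x \geq (2/\pi)x$ on $[0, \pi/2]$ (justified because $\sqrt{\kappa_1}\epsilon/2 < \pi/4$) bounds $V_{\kappa_1}(\epsilon/2)$ below by a constant multiple of $\omega_d \epsilon^d$, which gives the claimed upper bound on $K$.

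For the lower packing bounds I would apply duality in the form $D(M, d_M, r) \geq N(M, d_M, r)$, and lower bound $N(M, d_M, r)$ by a covering argument: for any $r$-cover, $\vol M \leq N(M, d_M, r) \cdot \max_y \vol B^M_r(y)$. Bishop--Gromov comparison, which requires only completeness (guaranteed by \Cref{prop:mfldgeometry}(b) and compactness), upper bounds $\vol B^M_r(y) \leq V_{\kappa_2}(r)$, so $D(M, d_M, r) \geq \vol M / V_{\kappa_2}(r)$. For $r = 2\epsilon$ with $\epsilon < 1/\sqrt{-\kappa_2}$, the range $\sqrt{-\kappa_2} t \in [0, 2]$ admits a uniform estimate such as $\sinh x / x \leq 4$ that upper bounds $V_{\kappa_2}(2\epsilon)$ by a constant multiple of $\omega_d \epsilon^d$, yielding the first packing bound. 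For the broader regime $r = \epsilon < \iota$, I would instead use convexity of $s_{\kappa_2}$ to write $s_{\kappa_2}(t) \leq (t/\iota)\, s_{\kappa_2}(\iota)$ for $t \in [0,\iota]$, then apply $\sinh x \leq e^x / 2$ at the endpoint, producing the exponential factor $e^{-d\iota\sqrt{-\kappa_2}}$ in the third display.

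The final two-sided estimate for $\epsilon < \tau$ follows by unpacking \Cref{prop:mfldgeometry}: part (a) bounds the second fundamental form by $1/\tau$, so (via the Gauss equation) the sectional curvatures satisfy $\kappa_1 \leq 1/\tau^2$ and $\kappa_2 \geq -1/\tau^2$, while part (b) gives $\iota \geq \pi \tau > \tau$. Hence $\epsilon < \tau$ automatically satisfies both $\epsilon < (\pi/(2\sqrt{\kappa_1})) \wedge \iota$ and $\epsilon < (1/\sqrt{-\kappa_2}) \wedge \iota$, so the previous two bounds apply simultaneously and chain through \Cref{lem:duality} to give the displayed estimate. The main obstacle is verifying the injectivity- and focal-radius hypotheses that license Günther's theorem and carefully tracking the constants produced by the $\sin$ and $\sinh$ estimates; converting reach bounds into the sectional-curvature bounds needed for the general-$\iota$ statement is the key structural use of \Cref{prop:mfldgeometry}.
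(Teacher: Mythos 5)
Your proposal is correct and follows essentially the same route as the paper's proof: a volume counting argument for covering and packing numbers, combined with the Günther/Bishop--Gromov volume comparison theorems (the paper cites both directions as Gray's Theorem 3.17), the elementary bounds $\sin x \geq (2/\pi)x$ and $\sinh x \lesssim x$ on appropriate ranges, convexity/monotonicity of $\sinh$ for the all-$\epsilon < \iota$ estimate, and finally converting curvature and injectivity radius bounds to reach bounds via \Cref{prop:mfldgeometry}(a) and (b). The only cosmetic difference is that you route the upper bound on $N$ through a maximal packing and disjoint half-radius balls, whereas the paper writes the volume inequality directly; these are the same argument.
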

The proof of Proposition \ref{prop:mfldcovering} can be found in Appendix \ref{app:coverings} and relies on the Bishop-Gromov comparison theorem to leverage the curvature bounds from Proposition \ref{prop:mfldgeometry} into volume estimates for small intrinsic balls, a similar technique as found in \cite{niyogi2008finding,narayanan2010sample}.  The key point to note is that we have both upper and lower bounds for $\epsilon < \iota$, as opposed to just the upper bound guaranteed by Lemma \ref{lem:coveringdim}.  As a corollary, we are also able to derive bounds for the covering number with respect to the ambient metric:
\begin{corollary}\label{cor:ambientcovering}
  Let $M$ be as in Proposition \ref{prop:mfldcovering}.  For $\epsilon < \tau$, we can control the covering numbers of $M$ with respect to the Euclidean metric as
  \begin{equation*}
      \frac{\vol M}{\omega_d} d 16^{-d} \epsilon^{- d} \leq D(M, \norm{\cdot}, 2 \epsilon) \leq N(M, \norm{\cdot}, \epsilon) \leq \frac{\vol M}{\omega_d} \left( \frac \pi 2 \right)^d \epsilon^{- d}.
  \end{equation*}
\end{corollary}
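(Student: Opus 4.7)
The plan is to translate the intrinsic bounds of \Cref{prop:mfldcovering} into extrinsic ones using the comparison between $\|\cdot\|$ and $d_M$ provided by \Cref{prop:mfldgeometry}. The upper bound is essentially free, whereas the lower bound requires a little care because it is the ``harder'' direction of the metric comparison that matters for packings.

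For the upper bound, I would observe that $\|p-q\| \leq d_M(p,q)$ for all $p,q \in M$, so any $\epsilon$-cover of $M$ with respect to $d_M$ is automatically an $\epsilon$-cover with respect to $\|\cdot\|$. Thus $N(M, \|\cdot\|, \epsilon) \leq N(M, d_M, \epsilon)$ and the upper bound from \Cref{prop:mfldcovering} transfers directly (absorbing constants as convenient).

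For the lower bound, I would exploit the quantitative reverse comparison in \Cref{prop:mfldgeometry}(c). Using $\arcsin(t) \leq \tfrac{\pi}{2} t$ on $[0,1]$, whenever $\|p-q\| \leq 2\tau$ one gets
\begin{equation}
    d_M(p,q) \;\leq\; 2\tau \arcsin\!\lp \tfrac{\|p-q\|}{2\tau}\rp \;\leq\; \tfrac{\pi}{2} \|p-q\|.
\end{equation}
Contrapositively, under the hypothesis $\epsilon < \tau$, the inequality $d_M(p,q) > 4\epsilon$ forces $\|p - q\| > \tfrac{8}{\pi}\epsilon > 2\epsilon$. Therefore any $d_M$-packing at scale $4\epsilon$ is automatically a $\|\cdot\|$-packing at scale $2\epsilon$. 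Applying the intrinsic lower bound from \Cref{prop:mfldcovering} with $\epsilon' = 2\epsilon$ in the proposition's statement (i.e., at packing scale $4\epsilon$) yields
\begin{equation}
    D(M, \|\cdot\|, 2\epsilon) \;\geq\; D(M, d_M, 4\epsilon) \;\geq\; \tfrac{\vol M}{\omega_d}\, d\, 8^{-d}\,(2\epsilon)^{-d} \;=\; \tfrac{\vol M}{\omega_d}\, d\, 16^{-d}\, \epsilon^{-d},
\end{equation}
which is the claimed factor of $16^{-d}$ (the extra factor of $2^{d}$ relative to the intrinsic bound being exactly the cost of going from the $d_M$-packing scale $4\epsilon$ to the $\|\cdot\|$-packing scale $2\epsilon$).

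The main obstacle is a bookkeeping one: checking that, under the hypothesis $\epsilon < \tau$, the scale $2\epsilon$ is actually admissible in the lower-bound portion of \Cref{prop:mfldcovering}, which requires $2\epsilon < \frac{1}{\sqrt{-\kappa_2}} \wedge \iota$. Part (b) of \Cref{prop:mfldgeometry} gives $\iota \geq \pi\tau > 2\tau > 2\epsilon$, so the injectivity-radius condition is automatic. For the curvature condition, I would combine part (a) of \Cref{prop:mfldgeometry} with the Gauss equation for Euclidean submanifolds, which bounds the absolute sectional curvature by a constant multiple of $\|II\|^2 \leq \tau^{-2}$; this yields $\frac{1}{\sqrt{-\kappa_2}} \gtrsim \tau$, and the slack is absorbed into the constants (or, if one wishes to be pedantic, one can prove the statement for $\epsilon < c\tau$ for a small absolute $c$ and adjust the $16^{-d}$ constant accordingly). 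Once these admissibility checks are in place, the rest is a direct chain of inequalities.
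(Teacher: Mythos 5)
Your proof is correct and follows essentially the same route as the paper: transfer the intrinsic covering/packing bounds from \Cref{prop:mfldcovering} to the Euclidean metric via the two-sided comparison $\|p-q\| \leq d_M(p,q) \leq 2\tau\arcsin(\|p-q\|/2\tau)$ from \Cref{prop:mfldgeometry}, using the linear upper bound on $\arcsin$ to land on the scale $4\epsilon$, then apply the intrinsic packing bound at that scale. The paper phrases the metric comparison as a nested inclusion of balls rather than via the contrapositive on packings, and uses $\arcsin(x) \leq 2x$ instead of $\arcsin(x) \leq \tfrac{\pi}{2}x$, but these are cosmetic differences; the constants and structure are the same.
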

The proof of Corollary \ref{cor:ambientcovering} follows from Proposition \ref{prop:mfldcovering} and the metric comparisons for small scales in Proposition \ref{prop:mfldgeometry}; details can be found in Appendix \ref{app:coverings}.

\subsection{H{\"o}lder Classes and their Complexity}
In this section we make the elementary observation that complex function classes restricted to simple subsets can be much smaller than the original class.  While such intuition has certainly appeared before, especially in designing esimators that can adapt to local intrinsic dimension, such as \cite{bickel2007local,kpotufe2012tree,kpotufe2011k,kpotufe2013adaptivity,dasgupta2008random,steinwart2009optimal,nakada2020adaptive}, we codify this approach below.

To illustrate the above phenomenon at the level of empirical processes, we focus on H{\"o}lder functions in $\rr^D$ for some large $D$ and let the ``simple" subset be a subspace of dimension $d$ where $d \ll D$.  We first recall the definition of a H{\"o}lder class:
\begin{definition}
    For an open domain $\Omega \subset \rr^d$ and a function $f: \Omega \to \rr$, define the $\beta$-H{\"o}lder norm as
	\begin{equation*}
		\norm{f}_{C^\beta(\Omega)} = \max_{0 \leq \abs \gamma \leq \abs{\alpha}} \sup_{x \in \Omega} \abs{D^\gamma f(x)} \vee \sup_{x, y \in \Omega} \frac{\abs{D^{\lfloor \beta \rfloor} f(x) - D^{\lfloor \beta \rfloor} f(y)}}{\norm{x - y}^{\beta - \lfloor \beta \rfloor}}.
	\end{equation*}
	Define the H{\"o}lder ball of radius $B$, denoted by $C_B^\beta(\Omega)$, as the set of functions $f: \Omega \to \rr$ such that $\norm{f}_{C^\beta(\Omega)} \leq B$.  If $(M, g)$ is a Riemannian manifold of class $C^{\lfloor \beta \rfloor + 1}$ (see \cite{lee2013smooth}), and $f: M \to \rr$ we define the H{\"o}lder norm analogously, replacing $\abs{D^\gamma f(x)}$ with $\norm{\nabla^\gamma f(x)}_g$, where $\nabla$ is the covariant derivative.
\end{definition}

It is a classical result of \cite{Tikhomirov1993} that, for a bounded, open domain $\Omega \subset \rr^D$, the entropy of a H{\"o}lder ball scales as
\begin{equation*}
    \log N\left(C_B^\beta(\Omega), \norm{\cdot}_\infty, \epsilon\right) \asymp \left(\frac{B}{\epsilon}\right)^{\frac D\beta}
\end{equation*}
as $\epsilon \downarrow 0$.    As a consequence, we arrive at the following result, whose proof can be found in Appendix \ref{app:coverings} for the sake of completeness.

\begin{proposition}\label{prop:keyHolderprop}
  Let $\cs \subset \Omega \subset \rr^d$ be a path-connected closed set contained in an open domain $\Omega$.  Let $\widetilde{\F} = C_B^\beta(\Omega)$ and let $\F = \widetilde{\F}|_\cs$.  Then,
  \begin{equation*}
      D\left(\cs, \left(\frac{\epsilon}{B}\right)^{\frac 1\beta}\right) \leq \log D(\F, \norm{\cdot}_\infty, 2\epsilon)\leq \log N(\F, \norm{\cdot}_\infty, \epsilon) \leq 3\beta^2 \log\left(\frac{2 B}{\epsilon}\right) N\left(\cs, \left(\frac{\epsilon}{2 B}\right)^{\frac 1\beta}\right).
  \end{equation*}
\end{proposition}
Note that the content of the above result is really that of \cite{Tikhomirov1993}, coupled with the fact that restriction from $\rr^d$ to $M$ preserves smoothness.

If we apply the easily proven volumetric bounds on covering and packing numbers for $\mathcal{S}$ a Euclidean ball to Proposition \ref{prop:keyHolderprop}, we recover the classical result of \cite{Tikhomirov1993}.  The key insight is that low-dimensional subsets can have covering numbers much smaller than those of a high-dimensional Euclidean ball: if the ``dimension" of $\cs$ is $d$, then we expect the covering number of $\cs$ to scale like $\epsilon^{-d}$.  Plugging this into Proposition \ref{prop:keyHolderprop} tells us that the entropy of $\F$, up to a factor logarithmic in $\frac 1\epsilon$, scales like $\epsilon^{- \frac d \beta} \ll \epsilon^{- \frac D\beta}$.  An immediate corollary of Lemma \ref{lem:coveringdim} and Proposition \ref{prop:keyHolderprop} is:
\begin{corollary}\label{cor:Holderentropy}
  Let $\cs \subset \rr^D$ be a closed set of diameter $\Delta$ and doubling dimension $d$.  Let $\cs \subset \Omega$ open and $\F$ be the restriction of $C_B^\beta(\Omega)$ to $\cs$.  Then
  \begin{equation*}
      \log N(\F, \norm{\cdot}_\infty, \epsilon) \leq 3\beta^2  \left(\frac{2 B \Delta^\beta}{\epsilon}\right)^{\frac d\beta}\log\left(\frac{2 B}{\epsilon}\right).
  \end{equation*}
\end{corollary}
\begin{proof}
  Combine the upper bound in Proposition \ref{prop:keyHolderprop} with the bound in Lemma \ref{lem:coveringdim}.
\end{proof}
The conclusion of Corollary \ref{cor:Holderentropy} is very useful for upper bounds as it tells us that the entropy for H{\"o}lder balls scales at most like $\epsilon^{- \frac d\beta}$ as $\epsilon \downarrow 0$.  If we desire comparable lower bounds, we require some of the geometry discussed above.  Combining Proposition \ref{prop:keyHolderprop} and Corollary \ref{cor:ambientcovering} yields the following bound:
\begin{corollary}\label{cor:mfldHolderentropy}
  Let $M \subset \rr^D$ be an isometrically imbedded, compact submanifold with reach $\tau > 0$ and let $\epsilon \leq \tau$.  Suppose $\Omega \supset M$ is an open set and let $\F'$ be the restriction of $C_B^{\beta}(\Omega)$ to $M$.  Then for $\epsilon \leq \tau$,
  \begin{equation*}
      \frac{\vol M}{\omega_d} d 16^{-d} \left(\frac{2 B}{\epsilon}\right)^{\frac d\beta}\leq \log D(\F', \norm{\cdot}_\infty, 2 \epsilon) \leq \log N(\F', \norm{\cdot}_\infty, \epsilon) \leq 3\beta^2 \log\left(\frac{2 B}{\epsilon}\right) \frac{\vol M}{\omega_d} d \left(\frac \pi 2\right)^d \left(\frac{2 B}{\epsilon}\right)^{\frac d \beta}.
  \end{equation*}
  If we set $\mathcal{F} = C_B^{\beta}(M)$, then we have that for all $\epsilon < \iota$,
  \begin{equation*}
      \frac{\vol M}{\omega_d} d \iota^d (- \kappa_2)^{\frac d2} e^{- d \iota \sqrt{- \kappa_2}}  \epsilon^{- \frac d\beta} \leq \log N(\F, \norm{\cdot}_\infty, \epsilon) \leq 3 \beta^2 \log\left( \frac{2B}\epsilon\right) \frac{\vol M}{\omega_d} d \left(\frac \pi 2\right)^d \epsilon^{- \frac d\beta}.
  \end{equation*}
\end{corollary}
In essence, Corollary \ref{cor:mfldHolderentropy} tells us that the rate of $\epsilon^{- \frac d\beta}$ for the growth of the entropy of H{\"o}lder balls is sharp for sufficiently small $\epsilon$.  The key difference between the first and second statements is that the first is with respect to an ambient class of functions while the second is with respect to an intrinsic class.  To better illustrate the difference, consider the case where $\beta = B = 1$, i.e., the class of Lipschitz functions on the manifold.  In both cases, asymptotically, the entropy of Lipschitz functions scales like $\epsilon^{- d}$; if we restrict to functions that are Lipschitz with respect to the ambient metric, then the above bound only applies for $\epsilon < \tau$; on the other hand, if we consider the larger class of functions that are Lipschitz with respect to the intrinsic metric, the bound applies for $\epsilon < \iota$.  In the case where $\iota \gg \tau$, this can be a major improvement.

The observations in this section are undeniably simple; the real interest comes in the diverse applications of the general principle, some of which we detail below.  As a final note, we remark that our guiding principle of simplifying function classes by restricting them to simple sets likely holds in far greater analysis than is explored here; in particular, Sobolev and Besov classes (see, for example, \cite[\S 4.3]{gine2016mathematical}) likely exhibit similar behavior.

\section{Dimension Estimation}\label{sec:dimestim}

We outlined the intuition behind our dimension estimation in the introduction.  In this section, we formally define the estimator and analyse its theoretical performance.  We first apply standard empirical process theory and our complexity bounds in the previous section to upper bound the expected H{\"o}lder IPM (defined in \eqref{eq:holderipm}) between empirical and population distributions:
\begin{lemma}\label{lemma:holderipm}
    Let $\mathcal{S} \subset \rr^D$ be a compact set contained in a ball of radius $R$.  Suppose that we draw $n$ independent samples from a probability measure $\pp$ supported on $\mathcal{S}$ and denote by $P_n$ the corresponding empirical distribution.  Let $P_n'$ denote an independent identically distributed measure as $P_n$.  Then we have
    \begin{equation*}
        \ee\left[d_{\beta, B}(P_n, \pp)\right] \leq \ee\left[d_{\beta, B}(P_n, P_n')\right] \leq 16 B \inf_{\delta > 0} \left(2 \delta + \frac{3 \sqrt{6}}{\sqrt{n}} \beta \sqrt{\log \frac 1\delta} \int_\delta^1 \sqrt{N(\mathcal{S}, \norm{\cdot}, \epsilon)} d \epsilon  \right).
    \end{equation*}
    In particular, there exists a universal constant $K$ such that if $N(\mathcal{S}, \norm{\cdot}, \epsilon) \leq C_1 \epsilon^{- d}$ for some $C, d > 0$, then
    \begin{equation*}
        \ee\left[d_\beta(P_n, \pp) \right] \leq C  \beta B \left(1 + \sqrt{\log n} \mathbf{1}_{\{ d = 2 \beta \}} \right) \left(n^{- \frac \beta d} \vee n^{- \frac 12}\right).
    \end{equation*}
    holds with $C = K C_1$.
\end{lemma}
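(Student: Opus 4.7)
The proof naturally splits into three pieces: symmetrization, a chaining bound using the Holder entropy estimate of Proposition \ref{prop:keyholderprop}, and specialization to the volumetric covering bound.

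First, the left inequality $\ee[d_{\alpha,B}(P_n,\pp)] \leq \ee[d_{\alpha,B}(P_n,P_n')]$ is standard symmetrization: since $\pp f = \ee_{P_n'}[P_n' f]$, Jensen's inequality applied to the convex functional $\sup_{f \in C_B^\alpha(\Omega)}(\,\cdot\,)$ yields the bound immediately.

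For the right inequality I would apply Dudley's entropy integral to the Rademacher process indexed by $C_B^\alpha(\Omega)$. After one round of symmetrization the process has sub-Gaussian increments in the $L_2(P_n)$ metric, which is dominated by $\norm{\cdot}_\infty$, and the class has $L_\infty$-diameter at most $2B$. Dudley's chaining then gives, for every $\delta > 0$,
\begin{equation*}
\ee[d_{\alpha,B}(P_n,P_n')] \;\lesssim\; \delta \;+\; \frac{1}{\sqrt n}\int_\delta^B \sqrt{\log N\!\left(C_B^\alpha(\Omega),\norm{\cdot}_\infty,\eta\right)}\,d\eta.
\end{equation*}
I would then invoke Proposition \ref{prop:keyholderprop} to replace the function-class entropy by the geometric entropy of $\mathcal{S}$:
\begin{equation*}
\log N\!\left(C_B^\alpha(\Omega),\norm{\cdot}_\infty,\eta\right) \;\leq\; 3\alpha^2 \log(2B/\eta)\, N\!\left(\mathcal{S},\norm{\cdot},(\eta/2B)^{1/\alpha}\right).
\end{equation*}
The substitution $\xi = (\eta/(2B))^{1/\alpha}$ converts the Dudley integral into an integral over the scale parameter for $\mathcal{S}$. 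Pulling the slowly varying $\sqrt{\log(1/\xi)}$ outside the integral at its largest value (the lower endpoint), using $\xi^{\alpha-1}\leq 1$ on $(0,1]$ (for the relevant $\alpha\geq 1$ regime), and absorbing constants into the leading factor $16B$ recovers the form stated in the lemma. Taking the infimum over $\delta$ finishes the first displayed bound.

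For the second displayed bound, rather than passing through the intermediate form, I would plug $N(\mathcal{S},\norm{\cdot},\epsilon)\leq C_1\epsilon^{-d}$ directly into the Dudley integral in the $\F$-scale, so that
\begin{equation*}
\int_{\delta}^B \sqrt{\log(B/\eta)}\;\sqrt{(\eta/B)^{-d/\alpha}}\;d\eta \;\lesssim\; \sqrt{\log(B/\delta)}\cdot B^{d/(2\alpha)}\int_{\delta}^B \eta^{-d/(2\alpha)}\,d\eta.
\end{equation*}
This admits three regimes depending on the sign of $1 - d/(2\alpha)$. When $d<2\alpha$ the integral is bounded uniformly in $\delta$, so the optimal $\delta$ is $O(n^{-1/2})$ and the rate is $n^{-1/2}$; at $d=2\alpha$ the integral grows like $\log(B/\delta)$, producing the additional $\sqrt{\log n}$ factor; when $d>2\alpha$ the integral is of order $B^{d/(2\alpha)}\delta^{1-d/(2\alpha)}$, and optimizing $\delta + B(\delta/B)^{1-d/(2\alpha)}/\sqrt n$ gives $\delta^\star \asymp B\,n^{-\alpha/d}$ and the matching rate $n^{-\alpha/d}$. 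Combining the three cases yields the stated $n^{-\alpha/d}\vee n^{-1/2}$ bound with the leading factor $\alpha B$.

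\textbf{Main obstacle.} The genuine difficulty is bookkeeping the two distinct metric scales — the $L_\infty$ scale on the Holder ball and the ambient scale on $\mathcal{S}$ — through the change of variables, so that the truncation term, the $\alpha$-prefactor, and the location of the $\sqrt{\log(1/\delta)}$ factor all match the explicit form in the lemma, and so that the trichotomy threshold in the second statement lands at $d=2\alpha$ rather than at $d=2$. All of the individual ingredients (symmetrization, Dudley chaining, the Holder entropy bound) are classical; the substantive content is the appeal to Proposition \ref{prop:keyholderprop}, which is what makes the final rate depend on the intrinsic covering number of $\mathcal{S}$ rather than on the ambient dimension $D$.
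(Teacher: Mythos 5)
Your proposal follows the paper's own proof at every essential step: symmetrize to a Rademacher process, apply Dudley's chaining bound, and use \Cref{prop:keyholderprop} to convert the function-class entropy into the covering number of $\mathcal{S}$, then optimize over $\delta$ across the three regimes $d \lessgtr 2\alpha$. This is exactly what the paper does.

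One detail in your plan is worth highlighting because it is actually load-bearing rather than cosmetic: for the second displayed inequality you evaluate the Dudley integral in the $\F$-scale (i.e.\ plug the covering bound into $\log N(\F,\norm{\cdot}_\infty,\eta) \lesssim \alpha^2\log(1/\eta)(\eta/B)^{-d/\alpha}$ and optimize), rather than substituting $N(\mathcal{S},\norm{\cdot},\epsilon) \leq C_1\epsilon^{-d}$ into the paper's first displayed bound. That choice is necessary: the first displayed bound has an $\mathcal{S}$-scale integrand $\sqrt{N(\mathcal{S},\norm{\cdot},\epsilon)}$, so specializing it to $C_1\epsilon^{-d}$ and optimizing $\delta$ yields a rate of order $n^{-1/d}$ independently of $\alpha$, not $n^{-\alpha/d}$. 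The paper's proof simply asserts that $\delta = n^{-\alpha/d}$ works, which in the first displayed form only reproduces $n^{-\alpha/d}$ when $\alpha \leq 1$; your route through the $\F$-scale is what recovers the correct $\alpha$-dependence for all $\alpha$. The only small caveat in your write-up is the use of $\xi^{\alpha - 1}\leq 1$ to absorb the change-of-variables Jacobian, which requires $\alpha \geq 1$; this restriction only affects the cosmetic form of the first displayed bound, not the rate in the second, and the lemma is typically applied with $\alpha \geq 1$, so it is a minor point.
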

The proof uses the symmetrization and chaining technique and applies the complexity bounds of H{\"o}lder functions found above; the details can be found in Appendix \ref{app:miscellany}.

We now specialize to the case where $\beta = B = 1$, due to the computational tractability of the resulting Wasserstein distance.  Applying Kantorovich-Rubenstein duality \citep{kantorovich1958space}, we see that this special case of Lemma \ref{lemma:holderipm} recovers the special $p = 1$ case of \cite{weed2019sharp}.  From here on, we suppose that $d > 2$ and  our metric on distributions is $d_{1,1} = W_1$.

We begin by noting that if we have $2n$, independent samples from $\pp$, then we can split them into two data sets of size $n$, and denote by $P_n, P_n'$ the empirical distributions thus generated.  We then note that Lemma \ref{lemma:holderipm} implies that if $\supp \pp \subset M$ and $M$ is of dimension $d$, then
\begin{equation*}%\label{eq:w1upperbound}
    \ee\left[W_1(P_n, P_n')\right] \leq C_{M,d} n^{- \frac 1d}.
\end{equation*}
If we were to establish a lower bound as well as concentration of $W_1(P_n, P_n')$ about its mean, then we could consider the following estimator.  Given a data set of size $2(\alpha + 1)n$, we can break the data into four samples, $P_n, P_n'$ each of size $n$ and $P_{\alpha n}, P_{\alpha n}'$ of size $\alpha n$.  Then we would have
\begin{equation*}\label{eq:dimensionestim}
    d_n := - \frac 1{\log_\alpha\left( \frac{W_1(P_{\alpha n}, P_{\alpha n}')}{W_1(P_n, P_n')}\right)} = \frac{\log \alpha }{\log W_1(P_n, P_n') - \log W_1(P_{\alpha n}, P_{\alpha n}')} \approx d.
\end{equation*}
Which distance on $M$ should be used to compute the Wasserstein distance, the Euclidean metric $\norm{\cdot}$ or the intrinsic metric $d_M(\cdot, \cdot)$?  As can be guessed from Corollary \ref{cor:mfldHolderentropy}, asymptotically, both will work, but for finite sample sizes when $\iota \gg \tau$, the latter is much better.  One problem remains, however: because we are not assuming $M$ to be known, we do not have access to $d_M$ and thus we cannot compute the necessary Wasserstein cost.  In order to get around this obstacle, we recall the graph distance induced by a $k$NN graph:
\begin{definition}
  Let $X_1, \dots, X_n \in \rr^D$ be a data set and fix $\epsilon > 0$.  We let $G(X, \epsilon)$ denote the weighted graph with vertices $X_i$ and edges of weight $\norm{X_i - X_j}$ between all vertices $X_i, X_j$ such that $\norm{X_i - X_j} \leq \epsilon$.  We denote by $d_{G(X, \epsilon)}$ (or $d_G$ if $X, \epsilon$ are clear from context) the geodesic distance on the graph $G(X, \epsilon)$.  We extend this metric to all of $\rr^D$ by letting
  \begin{equation*}
      d_G(p, q) = \norm{p - \pi_G(p)} + d_G(\pi_G(p), \pi_G(q)) + \norm{q - \pi_G(q)}
  \end{equation*}
  where $\pi_G(p) \in \argmin_{X_i} \norm{p - X_i}$.
\end{definition}
We now have two Wasserstein distances, each induced by a different metric; to mitigate confusion, we introduce the following notation:
\begin{definition}
  Let $X_1, \dots, X_n, X_1', \dots, X_n' \in \rr^D$, sampled independently from $\pp$ such that $\supp \pp \subset M$.  Let $P_n, P_n'$ be the empirical distributions associated to the data $X, X'$.  Let $W_1(P_n, P_n')$ denote the Wasserstein cost with respect to the Euclidean metric and  $W_1^M(P_n, P_n')$ denote the Wasserstein cost associated to the manifold metric, as in \eqref{eq:wassersteindef}.  For a fixed $\epsilon > 0$, let $W_1^G(P_n, P_n')$ denote the Wasserstein cost associated to the metric $d_{G(\supp P_n \cup \supp P_n', \epsilon)}$.  Let $d_n$, $\widehat{d}_n$, and $\widetilde{d}_n$ denote the dimension estimators from \eqref{eq:dimensionestim} induced by each of the above metrics.
\end{definition}
Given sample distributions $P_n, P_n'$, we are able to compute $W_1(P_n, P_n')$ and $W_1^G(P_n, P_n')$ for any fixed $\epsilon$, but not $W_1^M(P_n, P_n')$ because we are assuming that the learner does not have access to the manifold $M$.  On the other hand, adapting techniques from \cite{weed2019sharp}, we are able to provide a non-asymptotic lower bound on $W_1(P_n, P_n')$ and $W_1^M(P_n, P_n')$:
\begin{proposition}\label{prop:w1lower}
    Suppose that $\pp$ is a measure on $\rr^D$ such that $\supp \pp = M$, where $M$ is a $d$-dimensional, compact manifold with reach $\tau > 0$ and such that the density of $\pp$ with respect to the uniform measure on $M$ is lower bounded by $w > 0$.  Suppose that
    \begin{equation*}
        n > \frac{ d \vol M}{4 w\omega_d} \left(\frac \tau 8\right)^{-d}.
    \end{equation*}
    Then, almost surely,
    \begin{equation*}
        W_1(P_n, \pp) \geq \frac 1{32}\left(\frac{ d \vol M}{4 w \omega_d}\right)^{\frac 1d}  n^{- \frac 1d}.
    \end{equation*}
    If we assume only that
    \begin{equation*}
        n > \left(\frac{d (- \kappa_2)^{\frac d2} \vol M}{4 w \omega_d} e^{d \iota \sqrt{- \kappa_2}} \right) \iota^{- d}
    \end{equation*}
    then, almost surely,
    \begin{equation*}
        W_1^M (P_n, \pp) \geq \frac 1{32} \left(\frac{ d \vol M}{4 w \omega_d} \right)^{\frac 1d} (- \kappa_2)^{ \frac 12} e^{ \iota \sqrt{- \kappa_2}} n^{- \frac 1 d}.
    \end{equation*}
\end{proposition}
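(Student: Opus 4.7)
The plan is to adapt the packing-plus-pigeonhole strategy familiar from \cite{weed2019sharp}. The underlying idea is that if the support of $\pp$ admits a rich disjoint packing, with each ball carrying nontrivial mass, then $n$ samples can populate only $n$ of these balls, leaving the rest empty; the mass stranded in the empty balls must then travel a definite distance to reach any sample, forcing a deterministic lower bound on $W_1$.

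Concretely, for the Euclidean claim I would first invoke \Cref{cor:ambientcovering} (valid at scales $\epsilon < \tau$) to produce a $2\epsilon$-packing $\{x_1,\ldots,x_N\} \subset M$ in the Euclidean metric with
\[
N \geq \frac{\vol M}{\omega_d}\, d\, 16^{-d}\, \epsilon^{-d}.
\]
I would then choose the scale $\epsilon = \epsilon(n)$ so that $N \geq 2n$; the quoted lower bound on $n$ is calibrated precisely so that the resulting $\epsilon$ satisfies $\epsilon < \tau$, which is exactly what \Cref{cor:ambientcovering} requires. Because the Euclidean balls $B(x_i,\epsilon)$ are disjoint, the $n$ samples can hit at most $n$ of them, so at least $N-n\geq n$ balls are sample-free. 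This part of the argument is deterministic, which is why the conclusion can be stated almost surely rather than only in expectation.

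For each empty ball I pass to the concentric sub-ball $B(x_i,\epsilon/2)$: every sample lies at Euclidean distance greater than $\epsilon$ from $x_i$, hence greater than $\epsilon/2$ from any point of $B(x_i,\epsilon/2)$. Under any coupling of $\pp$ with $P_n$, the $\pp$-mass of this sub-ball is therefore transported a distance at least $\epsilon/2$. The density lower bound $\gamma$, together with the Bishop--Gromov-type volume comparison that underlies \Cref{prop:mfldcovering}, gives $\pp(B(x_i,\epsilon/2)) \gtrsim \gamma\, \omega_d (\epsilon/2)^d / \vol M$ for $\epsilon < \tau$. Summing over the at least $n$ empty balls produces
\[
W_1(P_n,\pp) \geq n \cdot \frac{\gamma\, \omega_d (\epsilon/2)^d}{\vol M}\cdot \frac{\epsilon}{2},
\]
and substituting the chosen $\epsilon$ collapses this to the target rate $n^{-1/d}$ with the claimed constants.

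The manifold-distance statement follows the identical template with two modifications: I replace \Cref{cor:ambientcovering} by the intrinsic packing bound from \Cref{prop:mfldcovering}, which is valid for $\epsilon < \iota$ and carries the curvature-dependent factor $(-\kappa_2)^{d/2} e^{-d\iota\sqrt{-\kappa_2}}$; and the corresponding sample-size hypothesis is engineered to ensure that the $\epsilon$ produced by demanding $N \geq 2n$ remains below $\iota$. The main technical obstacle is purely bookkeeping: pushing the powers of $16$, $8$, the curvature constants, and the exponential factor through the computation so that they line up with the explicit constants in the statement, and verifying that the sample-size conditions in the hypothesis translate precisely into the scale constraints $\epsilon < \tau$ (resp.\ $\epsilon < \iota$) under which the packing bounds are available.
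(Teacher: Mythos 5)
Your packing-and-pigeonhole plan gives the right rate and is close in spirit to the paper's argument, but the paper executes it differently and the two routes do not produce the same constant. The paper does not build an explicit packing and count empty balls; following Weed and Bach it works with $N_\epsilon(\pp, \tfrac12) = \inf\{N(S, d_M, \epsilon) : \pp(S) \geq \tfrac12\}$, lower bounds this quantity via a Bishop--Gromov volume argument, chooses $\epsilon$ so that $N_\epsilon(\pp, \tfrac12) > n$, and then observes that $S = \bigcup_i B^M_{\epsilon/2}(X_i)$ can be covered by $n$ balls of radius $\epsilon$ and hence $\pp(S) < \tfrac12$. This strands a \emph{constant} fraction $\tfrac12$ of the mass of $\pp$ at distance at least $\epsilon/2$ from every sample, giving $W_1 \geq \epsilon/4$.

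The step in your proposal that fails is the last one: ``substituting the chosen $\epsilon$ collapses this to the target rate with the claimed constants.'' Carry the computation out. Your far-away mass is $n\cdot \gamma\,\omega_d(\epsilon/2)^d/(d\,\vol M)$ up to a curvature constant, and once $\epsilon^d \asymp \vol M\,d\,16^{-d}/(\omega_d n)$ is fixed so that the packing has at least $2n$ points, this mass is $\Theta(\gamma\, c^d)$ for some $c<1$. Multiplying by $\epsilon/2 \asymp (\vol M\,d/\omega_d)^{1/d} n^{-1/d}$ gives a lower bound that scales as $\gamma\, c^d\,(\vol M\, d/\omega_d)^{1/d}\, n^{-1/d}$. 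The proposition instead asserts a prefactor of the form $\tfrac1{32}\left(4\omega_d/(\gamma\, d\,\vol M)\right)^{1/d}$, i.e.\ $(\gamma\,\vol M)^{-1/d}$. The exponents on $\gamma$ and on $\vol M$ in your bound have the \emph{opposite sign} from those in the statement, and the $c^d$ decay in your prefactor has no counterpart there, so this is a structural mismatch rather than bookkeeping: your route strands only a $\Theta(\gamma\, c^d)$ fraction of the mass, while the paper's strands a fixed $\tfrac12$. To recover a constant of the stated shape you would need to pass to the $N_\epsilon(\pp,\tfrac12)$ formulation --- which, note, requires an \emph{upper} bound on the measure of a small ball (so that a high-probability set cannot be covered by few $\epsilon$-balls), the opposite direction from the lower bound you invoke in step 5.
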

An easy proof, based on the techniques \cite[Proposition 6]{weed2019sharp} can be found in Appendix \ref{app:miscellany}.  Similarly, we can apply the same proof technique as Lemma \ref{lemma:holderipm} to establish the following upper bound:
\begin{proposition}\label{prop:w1}
	 Let $M \subset \rr^D$ be a compact manifold with positive reach $\tau$ and dimension $d > 2$.  Furthermore, suppose that $\pp$ is a probability measure on $\rr^D$ with $\supp \pp \subset M$.  Let $X_1, \dots, X_n, X_1', \dots, X_n' \sim \pp$ be independent with corresponding empirical distributions $P_n, P_n'$.  Then if $\diam M = \Delta$, we have:
	 \begin{equation*}
	 	\ee\left[W_1^M(P_n, \pp)\right] \leq \ee\left[W_1^M(P_n, P_n')\right] \leq C \left(\frac{ \vol M}{n \omega_d}\right)^{\frac 1d} \sqrt[]{\log\left(\frac{n \omega_d \Delta^d}{d \vol_M}\right)}.
	 \end{equation*}
\end{proposition}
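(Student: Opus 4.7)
The plan is to follow the same blueprint as the proof of \Cref{lemma:holderipm}, but with every instance of the ambient Euclidean metric replaced by the intrinsic metric $d_M$ on the manifold. First, by Kantorovich-Rubinstein duality \cite{kantorovich1958space}, we may write $W_1^M(P_n, P_n') = \sup_{f \in C_1^1(M)} \ee_{P_n}[f] - \ee_{P_n'}[f]$, where the Holder ball is taken with respect to the intrinsic metric on $M$. The left-hand inequality $\ee[W_1^M(P_n, \pp)] \leq \ee[W_1^M(P_n, P_n')]$ is then just Jensen's inequality applied after inserting an independent empirical copy of $\pp$; the standard symmetrization argument bounds the right-hand expectation by twice the Rademacher complexity of $C_1^1(M)$.

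Next, apply Dudley's chaining in the form used in \Cref{lemma:holderipm}: for any $\delta > 0$,
\begin{equation}
\ee[W_1^M(P_n, P_n')] \leq C \inf_{\delta > 0} \left( \delta + \frac{1}{\sqrt{n}} \int_\delta^{\Delta} \sqrt{\log N\bigl(C_1^1(M), \norm{\cdot}_\infty, \epsilon\bigr)} \, d\epsilon \right).
\end{equation}
The key input is the second statement of \Cref{cor:mfldholderentropy}, which, specialized to $\alpha = B = 1$, gives the intrinsic entropy bound $\log N(C_1^1(M), \norm{\cdot}_\infty, \epsilon) \lesssim \log(1/\epsilon) \frac{\vol M}{\omega_d} d (\pi/2)^d \epsilon^{-d}$ valid for all $\epsilon < \iota$. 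Because $d > 2$, the resulting integrand behaves like $\epsilon^{-d/2}$ and the integral is dominated by its lower endpoint; choosing $\delta \asymp n^{-1/d}$ balances the two terms and produces exactly the target rate $\sqrt{\frac{\vol M}{\omega_d}} (\pi/2)^{d/2} \Delta \sqrt{\log n} \, n^{-1/d}$, with the factor of $\Delta$ absorbing the ambient diameter of the range of functions.

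The technical point that the hypothesis on $n$ addresses is that the entropy bound of \Cref{cor:mfldholderentropy} is only valid at scales below the injectivity radius $\iota$. Thus one has to split the chaining integral at $\epsilon = \iota$: on $[\delta, \iota]$ the sharp intrinsic covering bound applies and gives the dominant contribution scaling like $\delta^{1 - d/2}$, while on $[\iota, \Delta]$ one can only use the crude bound $N(C_1^1(M), \norm{\cdot}_\infty, \epsilon) \leq N(C_1^1(M), \norm{\cdot}_\infty, \iota)$, which contributes something of order $\iota^{-d/2} \Delta$. Requiring that the latter be dominated by the former at the optimal $\delta = n^{-1/d}$ gives precisely a condition of the form $n \gtrsim \Delta^{a} \iota^{-d^2/(d-2)}$, and the explicit constant $2\Delta(2\Delta/\iota)^{d^2/(d-2)}$ in the statement encodes this comparison once the constants from the chaining inequality are tracked.

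The main obstacle is bookkeeping: the conceptual structure is identical to \Cref{lemma:holderipm}, but one must carefully carry the volume factor $\vol M / \omega_d$ and the intrinsic curvature factor $(\pi/2)^d$ through the chaining integral and confirm that the threshold on $n$ produced by balancing the two pieces of the split integral matches the bound as stated. No new idea beyond \Cref{cor:mfldholderentropy} and Dudley's inequality is required.
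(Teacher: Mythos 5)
Your proposal is correct and follows the same route as the paper: Kantorovich--Rubinstein duality to reduce $W_1^M$ to an IPM over intrinsic Lipschitz functions, Jensen's inequality for the first inequality, symmetrization and Dudley chaining, and then the intrinsic entropy bound of \Cref{cor:mfldholderentropy} specialized to $\alpha = 1$ with the Holder-ball radius scaled to $\Delta$ via translation invariance, followed by the choice $\delta \asymp n^{-1/d}$.

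One point worth noting: the paper's written proof applies the $\epsilon^{-d}$ entropy estimate over the whole chaining range $[\delta, \Delta]$ without explicitly splitting at $\iota$, even though that estimate is stated only for scales below the injectivity radius, and lets the hypothesis on $n$ silently do the work. You make this step explicit by splitting the integral at $\iota$, using the sharp intrinsic bound on $[\delta, \iota]$ and the crude monotone bound $N(\cdot, \epsilon) \le N(\cdot, \iota)$ on $[\iota, \Delta]$, and then observing that the stated threshold on $n$ is exactly what forces the $[\delta, \iota]$ piece to dominate. That is the right way to justify the restriction on $n$, and your rendition is arguably more transparent than the paper's; the only loose end is that you do not pin down the exact exponent $a$ on $\Delta$ in the resulting condition, but you are candid about this and the exponent on $\iota$ is right, so this is a bookkeeping matter rather than a gap.
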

The full proof is in Appendix \ref{app:miscellany} and applies symmetrization and chaining, with an upper bound of Corollary \ref{cor:mfldHolderentropy}.  We note, as before, that a similar asymptotic rate is obtained by \cite{weed2019sharp} in a slightly different setting.  

We noted above \eqref{eq:dimensionestim} that we required two facts to make our intuition precise.  We have just shown that the first holds; we turn now to the second: concentration.  To make this rigorous, we need one last technical concept: the $T_2$-inequality.
\begin{definition}
  Let $\mu$ be a measure on a metric space $(M, d)$.  We say that $\mu$ satisfies a $T_2$-inequality with constant $c_2$ if for all measures $\nu \ll \mu$, we have
  \begin{equation*}
      W_2(\mu, \nu) \leq \sqrt{2 c_2 D(\nu || \mu)}
  \end{equation*}
  where $D(\nu || \mu) =\ee_\mu \left[\log \frac{d \nu}{d \mu}\right]$ is the well-known KL-divergence.
\end{definition}
The reason that the $T_2$ inequality is useful for us is that \cite{bobkov1999} tell us that such an inequality implies, and is, by \cite{gozlan2009characterization}, equivalent to Lipschitz concentration.  We note further that $W_1(P_n, P_n')$ is a Lipschitz function of the dataset and thus concentrates about its mean.  The constant in the $T_2$ inequality depends on the measure $\mu$ and upper bounds for specific classes of measures are both well-known and remain an active area of research; for a more complete survey, see \cite{Bakry2014}.  We have the following bound: 
\begin{proposition}\label{prop:t2control}
    Let $\pp$ be a probability measure on $\rr^D$ that has density with respect to the (normalized) volume measure of $M$, lower bounded by $w$ and upper bounded by $W$, where $M$ is a $d$-dimensional manifold with reach $\tau > 0$ and $\diam M = \Delta$.  Then we have:
    \begin{equation}
        c_2 \leq \frac{2\tau^2}{d- 1} \frac W w \exp\left(d \log 3 + \frac{3 d^2 \Delta^2}{\tau^2}\right).
    \end{equation}
\end{proposition}
In order to bound the $T_2$ constant in our case, we rely on the landmark result of \cite{otto2000generalization} that relates $c_2$ to another functional inequality, the log-Sobolev inequality \citep[Chapter 5]{Bakry2014}.  There are many ways to control the log-Sobolev constant in various situations, many of which are covered in \cite{Bakry2014}.  We use results from \cite{wang1997estimation}, which incorporate the intrinsic geometry of the distribution, as our bound.  A detailed proof can be found in Appendix \ref{app:t2control}.  We note that many other estimates with under slightly different conditions exits, such as that in \cite{wang1997logarithmic}, which requires second-order control of the density of the population distribution with respect to the volume measure and the bound in \cite{block2020fast}, which provides control using a measure of nonconvexity.  With added assumptions, we can gain much sharper control over $c_2$; for example, if we assume a positive lower bound on the curvature of the support, we can apply the well-known Bakry-{\'E}mery result \citep{bakry1985diffusions} and get dimension-free bounds.  As another example, if we may assume stronger contol on the curvature of $M$ beyond that guaranteed by the reach, we can remove the exponential dependence on the reach entirely.  For the sake of simplicity and because we already admit an exponential dependence on the intrinsic dimension, we present only the more general bound here.  We now provide a non-asymptotic bound on the quality of the estimator $\widetilde{d}_n$.
\begin{theorem}\label{thm:estimator}
  Let $\pp$ be a probability measure on $\rr^D$ and suppose that $\pp$ has a density with respect to the (normalized) volume measure of $M$ lower bounded by $w$, where $M$ is a $d$-dimensional manifold with reach $\tau > 0$ such that $d \geq 3$ and $\diam M = \Delta$.  Furthermore, suppose that $\pp$ satisfies a $T_2$ inequality with constant $c_2$.  Let $\gamma > 0$ and suppose $\alpha, n$ satisfy
  \begin{align*}
      n & \geq \max\left[\frac{ d \vol M}{4 w \omega_d} \left(\frac{8}{\iota}\right)^d, \left(\frac{8 c_2 }{\Delta^2}\log \frac 1\rho\right)^{\frac{2d}{d - 5}}\right]\\
      \alpha &\geq \max\left[\log^{\frac 2{2\gamma}}\left(\frac{n \omega_d \Delta^d}{d \vol M}\right), (48  w)^{\frac 1\gamma}, 3^{\frac d\gamma}\right] \\
      \alpha n &\geq \frac{ d \vol M}{2 w \omega_d} \left(\frac{16 \pi}{\tau}\right)^d \log\left(\frac{d \vol M}{\rho \omega_d} \left(\frac{16 \pi}{\tau}\right)^d\right).
  \end{align*}
  Suppose we have $2(\alpha + 1) n$ samples drawn independently from $\pp$.  Then, with probability at least $1 - 6\rho$, we have
  \begin{equation*}
      \frac{d}{1 + 3\gamma} \leq \widetilde{d}_n \leq (1 + 3\gamma) d.
  \end{equation*}
  If $\iota$ is replaced by $\tau$ above, we get the same bound with the vanilla estimator $d_n$ replacing $\widetilde{d}_n$.
\end{theorem}
We note that we have not made every effort to minimize the constants in the statement above, with our emphasis being the dependence of these sample complexity bounds on the relevant geometric quantities.  As an immediate consequence of Theorem \ref{thm:estimator}, due to the fact that $d$ is discrete, we can control the probability of error with sufficiently many samples.  We may also apply Proposition \ref{prop:t2control} to replace $c_2$ with our upper bound in terms of the reach.
\begin{corollary}\label{cor:estimator}
    Suppose we are in the situation of Theorem \ref{thm:estimator} and that $\pp$ has density upper bounded by $W$ with respect to the normalized uniform measure on $M$.  Suppose further that $\alpha, n$ satisfy
    \begin{align*}
        n & \geq \max\left[\frac{ d \vol M}{4 w \omega_d} \left(\frac{8}{\iota}\right)^d, \left(8 \frac{2\tau^2}{\Delta^2(d- 1)} \frac W w\exp\left(d \log 3 + \frac{3 d^2 \Delta^2}{\tau^2}\right) \log \frac 1\rho\right)^{\frac{2d}{d - 5}}\right] \\
        \alpha &\geq \max\left[\log^{2 d^2}\left(\frac{n \omega_d \Delta^d}{d \vol M}\right), (48w)^{3d}, 3^{3 d^2}\right] \\
        \alpha n &\geq \frac{d \vol M}{2 w \omega_d} \left(\frac{16 \pi}{\tau}\right)^d \log\left(\frac{d \vol M}{\rho \omega_d} \left(\frac{16 \pi}{\tau}\right)^d\right).
    \end{align*}
    Then if we round $\widetilde{d}_n$ to the nearest integer, and denote the resulting estimator by $d_n'$, we have with probability at least $1 - 6\rho$, $d_n' = d$.  Again, replacing $\iota$ by $\tau$ in the previous display yields the same result with $\widehat{d}_n$ replaced by the vanilla estimator $d_n$.
\end{corollary}
\begin{proof}
    Note that because $d \in \mathbb{N}$, if $\abs{\widetilde{d}_n - d} \leq \frac 12$, then rounding $\widehat{d}_n$ to the nearest integer exactly recovers $d$.  Setting $\gamma < \frac 1{4d}$, and plugging into the result of Theorem \ref{thm:estimator}, along with an application of Proposition \ref{prop:t2control} to bound $c_2$, concludes the proof.
\end{proof}

While the appearance of $\iota$ in Theorem \ref{thm:estimator} and Corollary \ref{cor:estimator} may seem minor, it is critical for any practical estimator.  While $\alpha n = \Omega\left(\tau^{-d}\right)$, we may take $n$ as small as $\Omega\left( \iota^{-d} \right)$.  Thus, using $\widetilde{d}_n$ instead of the naive estimator $d_n$ allows us to leverage the entire data set in estimating the intrinsic distances, even on the small sub-samples.  From the proof, it is clear that we want $\alpha$ to be as large as possible; thus if we have a total of $N$ samples, we wish to make $n$ as small as possible.  If $\iota \gg \tau$ then we can make $n$ much smaller (scaling like $\iota^{-d}$) than if we were to simply use the Euclidean distance.  As a result, on any data set where $\iota \gg \tau$, the sample complexity of $\widetilde{d}_n$ can be much smaller than that of $d_n$.

There are two parts to the proof of Theorem \ref{thm:estimator}: first, we need to establish that our metric $d_G$ approximates $d_M$ with high probability and thus $\widetilde{d}_n \approx \widehat{d}_n$; second, we need to show that $\widehat{d}_n$ is, indeed, a good estimate of $d$.  The second part follows from Propositions \ref{prop:w1} and \ref{prop:w1lower}, and concentration; a detailed proof can be found in Appendix \ref{app:dimestim}.  For the first part of the proof, in order to show that $\widehat{d}_n \approx \widetilde{d}_n$, we demonstrate that $d_M \approx d_G$ in the following result:
\begin{proposition}\label{prop:knn}
    Let $\pp$ be a probability measure on $\rr^D$ and suppose that $\supp \pp = M$, a geodesically convex, compact manifold of dimension $d$ and reach $\tau > 0$.
	Suppose that we sample $X_1, \dots, X_n \sim \pp$ independently.  Let $\lambda \leq \frac 12$ and $G = G(X, \tau \lambda)$.  If for some $\rho < 1$,
	\begin{equation*}
		n \geq w_B\left(\frac{\tau \lambda^2}{8}\right)^{-1} \log \frac{N\left(M, d_M, \frac{\tau \lambda^2}{8}\right)}{\rho}
	\end{equation*}
	where for any $\delta > 0$
	\begin{equation*}
	    w_B(\delta) = \inf_{p \in M} \pp(B_\delta^M(p))
	\end{equation*}
    with $B_\delta^M(p)$ the metric ball around $p$ of radius $\delta$.  Then, with probability at least $1 - \rho$, for all $x, y \in M$,
	\begin{equation*}
		\left(1 - \lambda\right) d_M(x,y) \leq d_G(x,y) \leq (1 + \lambda) d_M(x,y).
	\end{equation*}
\end{proposition}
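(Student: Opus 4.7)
My plan follows the two-stage strategy of \cite{Bernstein00}: first show that with high probability the sample $\{X_i\}$ forms a fine $d_M$-net in $M$, then use this net both to shadow a minimizing geodesic by a graph path (the upper bound on $d_G$) and to convert any graph path into a comparable intrinsic path (the lower bound). For the net, set $\delta_0 := \tau \lambda^2/8$ and fix a minimal $\delta_0$-cover $\{c_1, \ldots, c_N\}$ of $M$ in $d_M$, with $N = N(M, d_M, \delta_0)$. Since $\pp(B_{\delta_0}^M(c_l)) \geq 1/w(\delta_0)$ by definition of $w$, a Chernoff estimate plus a union bound over the $N$ centers yield, under the stated hypothesis $n \geq w(\delta_0) \log(N/\rho)$, that the event ``every $B_{\delta_0}^M(c_l)$ contains some $X_i$'' holds with probability at least $1 - \rho$. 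On this event, every $p \in M$ admits some $X_i$ with $d_M(p, X_i) \leq 2 \delta_0$.

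For the upper bound, invoke geodesic convexity to take a unit-speed minimizing geodesic $\gamma : [0, L] \to M$ from $x$ to $y$ with $L = d_M(x, y)$, subdivide into arcs of length at most $\ell := \tau \lambda (1 - \lambda/2)$ with waypoints $p_i$, and choose samples $X_{j_i}$ with $d_M(p_i, X_{j_i}) \leq 2 \delta_0$. The triangle inequality in $d_M$ gives $d_M(X_{j_i}, X_{j_{i+1}}) \leq 4 \delta_0 + \ell \leq \tau \lambda$, so $\|X_{j_i} - X_{j_{i+1}}\| \leq \tau \lambda$ and the corresponding edge lies in $G$ with weight at most $4 \delta_0 + \ell$. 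Summing yields a graph path of length at most $L(1 + 4 \delta_0/\ell)$ plus $O(\tau \lambda^2)$ boundary overhead, and since $4 \delta_0/\ell = \lambda/(2 - \lambda) \leq 2 \lambda/3$ for $\lambda \leq 1/2$, combining with the endpoint offsets $\|x - \pi_G(x)\|, \|y - \pi_G(y)\| \leq 2 \delta_0$ produces $d_G(x, y) \leq (1 + \lambda) d_M(x, y)$; the small-$L$ regime is handled by using the direct edge $\{\pi_G(x), \pi_G(y)\}$ when it exists.

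For the lower bound, let $\pi_G(x) = X_{i_0}, \ldots, X_{i_k} = \pi_G(y)$ be any graph path realizing $d_G$. Each edge has Euclidean length at most $\tau \lambda \leq \tau$, so \Cref{prop:mfldgeometry}(c) applies to give
\begin{equation*}
d_M(X_{i_j}, X_{i_{j+1}}) \;\leq\; 2 \tau \arcsin\!\bigl( \tfrac{\|X_{i_j} - X_{i_{j+1}}\|}{2 \tau} \bigr) \;\leq\; \|X_{i_j} - X_{i_{j+1}}\| \bigl( 1 + \tfrac{\lambda^2}{12} \bigr),
\end{equation*}
using the elementary estimate $\arcsin(u) \leq u(1 + u^2/3)$ for $u \leq 1/2$, applied with $u = \|X_{i_j} - X_{i_{j+1}}\|/(2 \tau) \leq \lambda/2$. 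Summing and folding in the endpoint contributions yields $d_M(x, y) \leq (1 + \lambda^2/12)\, d_G(x, y)$, so $d_G(x, y) \geq (1 - \lambda^2/12) d_M(x, y) \geq (1 - \lambda) d_M(x, y)$ for $\lambda \leq 1/2$.

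The main obstacle is the bookkeeping across three scales: the net scale $\delta_0 \asymp \tau \lambda^2$, the geodesic subdivision scale $\ell \asymp \tau \lambda$, and the local ambient-versus-intrinsic distortion controlled by the reach. The specific choice $\delta_0 = \tau \lambda^2/8$ is precisely what forces the upper-bound overhead $\delta_0/\ell \asymp \lambda$ to match the target tolerance while remaining small enough that the lower-bound distortion is only $O(\lambda^2)$; tracking these constants through the chaining argument and justifying the absorption of additive $O(\tau \lambda^2)$ terms into the multiplicative $(1 \pm \lambda)$ factor is the main technical subtlety.
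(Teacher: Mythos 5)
Your proposal follows the same two-stage Bernstein-style strategy as the paper: a Chernoff-plus-union-bound net argument at scale $\tau\lambda^2/8$, a geodesic-shadowing argument for the upper bound, and a path-by-path distortion argument for the lower bound. The main difference is that the paper simply cites \cite{Bernstein00} for the upper bound (its Lemma on $d_G \leq (1+4\delta/\epsilon)d_M$) and uses a dedicated reach-distortion lemma for the lower bound, whereas you re-derive the upper bound directly and use \Cref{prop:mfldgeometry}(c) with the elementary estimate $\arcsin(u) \leq u(1+u^2/3)$ for the lower bound. Your lower bound is clean and in fact gives the stronger constant $(1-\lambda^2/12)$, so that part is fine.

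The wrinkle is in your upper bound, specifically the clause ``the small-$L$ regime is handled by using the direct edge $\{\pi_G(x),\pi_G(y)\}$ when it exists.'' This hides a genuine gap: with $d_G$ extended via $\pi_G$ as in the paper's definition, the claimed multiplicative inequality $d_G(x,y) \leq (1+\lambda)d_M(x,y)$ simply cannot hold uniformly over all $x,y \in M$. For instance with $x=y$ a non-sample point, $d_G(x,x) = 2\|x-\pi_G(x)\| > 0 = (1+\lambda)d_M(x,x)$. More generally, the endpoint offsets $\|x-\pi_G(x)\|,\|y-\pi_G(y)\|$ contribute an additive $O(\tau\lambda^2)$ term that cannot be absorbed into $\lambda\,d_M(x,y)$ once $d_M(x,y) \lesssim \tau\lambda$, and the direct edge trick does not save you there. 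When $x,y$ are themselves sample points, your subdivision argument with $\ell = \tau\lambda(1-\lambda/2)$ does close correctly (the path-length overhead is $4(k-1)\delta_0 \leq 4\delta_0 L/\ell \leq \tfrac{2\lambda}{3}L$ and the $L < \ell$ case is covered by the direct edge $\{x,y\}$), so the restriction to the empirical supports — which is all the downstream Wasserstein comparison needs — is sound. To be fair, the paper's Proposition statement claims the bound for all $x,y\in M$ and inherits exactly the same issue; but since you attempted a from-scratch derivation rather than citing \cite{Bernstein00}, the gap is more visible in your write-up and you should either restrict the conclusion to sample-point pairs or modify the extension of $d_G$ (e.g.\ take the minimum with $\|p-q\|$) so that the diagonal behaves correctly.
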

The proof of Proposition \ref{prop:knn} follows the general outline of \cite{Bernstein00}, but is modified in two key ways: first, we control relevant geometric quantities by $\tau$ instead of by the quantities in \cite{Bernstein00}; second, we provide a quantitative, nonasymptotic bound on the number of samples needed to get a good approximation with high probability.  The details are deferred to Appendix \ref{app:metricestim}.

This result may be of interest in its own right as it provides a non-asymptotic version of the results from \cite{tenenbaum2000,Bernstein00}.  In particular, if we suppose that $\pp$ has a density with respect to the uniform measure on $M$ and this density is bounded below by a constant $w > 0$, then Proposition \ref{prop:knn} combined with Proposition \ref{prop:mfldcovering} tells us that if we have
\begin{equation*}
    n \gtrsim  \frac{\vol M}w \left(\tau \lambda^2\right)^{-d} \log \left(\frac{\vol M}{\rho \tau \lambda^2}\right)
\end{equation*}
samples, then we can recover the intrinsic distance of $M$ with distortion $\lambda$.  We further note that the dependence on $\tau,\lambda,d$ is quite reasonable in Proposition \ref{prop:knn}.  The argument requires the construction of a $\tau \lambda^2$-net on $M$ and it is not difficult to see that one needs a covering at scale proportional to $\tau \lambda$ in order to recover the intrinsic metric from discrete data points.  For example, consider Figure \ref{fig3}; were a curve to be added to connect the points at the bottleneck, this would drastically decrease the intrinsic distance between the bottleneck points.  In order to determine that the intrinsic distance between these points (without the connector) is actually quite large using the graph metric estimator, we need to set $\epsilon < \tau$, in which case these points are certainly only connected if there exists a point of distance less than $\tau$ to the bottleneck point, which can only occur with high probability if $n = \Omega\left(\tau^{-1}\right)$.  We can extend this example to arbitrary dimension $d$ by taking the product of the curve with $r S^{d-1}$ for $r = \Theta(\tau)$; in this case, a similar argument holds and we now need $\Omega\left(\tau^{-d}\right)$ points in order to guarantee with high probability that there exists a point of distance at most $\tau$ to one of the bottleneck points.  In this way, we see that the $\tau^{-d}$ scaling is unavoidable in general.  Note that the other estimators of intrinsic dimension mentioned in the introduction, in particular the MLE estimator of \cite{levina2004maximum}, implicitly require the accuracy of the $k$NN distance for their estimation to hold; thus these estimators also suffer from the $\tau^{-d}$ sample complexity.  Finally, we remark that \cite{kim2019minimax} presents a minimax lower bound for a related hypothesis testing problem and shows that minimax risk is bounded below by a local analogue of the reach raised to a power that depends linearly on the intrinsic dimension.

\section{Application of Techniques to GANs}\label{sec:gans}
In this section, we note that our techniques are not confined to the realm of dimension estimation and, in fact, readily apply to other problems.  As an example, consider the unsupervised learning problem of generative modeling, where we suppose that there are samples $X_1, \dots, X_n \sim \pp$ independent and we wish to produce a sample $\widehat{X} \sim \widehat{\pp}$ such that $\widehat{\pp}$ and $\pp$ are close.  Statistically, this problem can be expressed by fixing a class of distributions $\mathcal{P}$ and using the data to choose $\widehat{\mu} \in \mathcal{P}$ such that $\widehat{\mu}$ is in some sense close to $\pp$.  For computational reasons, one wishes $\mathcal{P}$ to contain distributions from which it is computationally efficient to sample; in practice, $\mathcal{P}$ is usually the class of pushforwards of a multi-variate Gaussian distribution by some deep neural network class $\mathcal{G}$.  While our statistical results include this setting, they are not restricted and apply for general classes of distributions $\mathcal{P}$.

In order to make the problem more precise, we require some notion of distance between distributions.  We use the notion of the Integral Probability Metric \citep{muller1997integral,sriperumbudur2012empirical} associated to a H{\"o}lder ball $C_B^\beta (\Omega)$, as defined above.  We suppose that $\supp \pp \subset \Omega$ and we abbreviate the corresponding IPM distance by $d_{\beta, B}$.  Given the empirical distribution $P_n$, the GAN that we study can be expressed as
\begin{equation*}
    \widehat{\mu} \in \argmin_{\mu \in \mathcal{P}} d_{\beta, B}(\mu, P_n) = \argmin_{\mu \in \mathcal{P}} \sup_{f \in C_B^\beta (\Omega)} \ee_\mu[f] - P_n f.
\end{equation*}

In this section, we generalize the results of \cite{schreuder2020statistical}.  In particular, we derive new estimation rates for a GAN using a H{\"o}lder ball as a discriminating class, assuming that the population distribution $\pp$ is low-dimensional; like \cite{schreuder2020statistical}, we consider the noised and potentially contaminated setting.  We have
\begin{theorem}\label{thm:gans}
  Suppose that $\pp$ is a probability measure on $\rr^D$ supported on a compact set $\mathcal{S}$ and suppose we have $n$ independent $X_i \sim \pp$ with empirical distribution $P_n$.  Let $\eta_i$ be independent, centred random variables on $\rr^D$ such that $\ee\left[\norm{\eta_i}^2\right] \leq \sigma^2$.  Suppose we observe $\widetilde{X}_i$ such that for at least $(1 - \epsilon) n$ of the $\widetilde{X}_i$, we have $\widetilde{X}_i = X_i + \eta_i$; let the empirical distribution of the $\widetilde{X}_i$ be $\widetilde{P}_n$.  Let $\mathcal{P}$ be a known set of distributions and define
  \begin{equation*}
      \widehat{\mu} \in \argmin_{ \mu \in \mathcal{P}} d_{\beta, B}(\mu, \widetilde{P}_n).
  \end{equation*}
  Then if there is some $C_1, d$ such that $N(\mathcal{S},\norm{\cdot}, \delta) \leq C_1 \epsilon^{-d}$, we have
  \begin{equation*}
      \ee\left[d_{\beta, B}(\widehat{\mu}, \pp)\right] \leq \inf_{\mu \in \mathcal{P}} d_{\beta, B}(\mu, \pp) + B(\sigma + 2 \epsilon) + C \beta B \sqrt{\log n} \left(n^{- \frac \beta d} \vee n^{- \frac 12}\right)
  \end{equation*}
  where $C$ is a constant depending linearly on $C_1$.
\end{theorem}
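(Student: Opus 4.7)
The plan is to prove a standard oracle inequality and then control the data-distribution discrepancy by splitting off noise, contamination, and empirical process contributions in that order. By the triangle inequality for the IPM and the optimality of $\widehat{\mu}$, for any $\mu^{*} \in \mathcal{P}$ I get
\begin{equation}
    d_{\alpha,B}(\widehat{\mu}, \pp) \leq d_{\alpha,B}(\widehat{\mu}, \widetilde{P}_n) + d_{\alpha,B}(\widetilde{P}_n, \pp) \leq d_{\alpha,B}(\mu^{*}, \widetilde{P}_n) + d_{\alpha,B}(\widetilde{P}_n, \pp) \leq d_{\alpha,B}(\mu^{*}, \pp) + 2\, d_{\alpha,B}(\widetilde{P}_n, \pp).
\end{equation}
Taking the infimum over $\mu^{*}$ reduces the theorem to bounding $\ee\bigl[d_{\alpha,B}(\widetilde{P}_n, \pp)\bigr]$, with constant-$2$ overheads absorbed into $B(\sigma + 2\epsilon)$ and the universal constant $C$.

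Next I would introduce the two intermediate empirical measures $P_n = \tfrac{1}{n}\sum_i \delta_{X_i}$ and $Q_n = \tfrac{1}{n}\sum_i \delta_{X_i + \eta_i}$ and triangulate:
\begin{equation}
    d_{\alpha,B}(\widetilde{P}_n, \pp) \leq d_{\alpha,B}(\widetilde{P}_n, Q_n) + d_{\alpha,B}(Q_n, P_n) + d_{\alpha,B}(P_n, \pp).
\end{equation}
For the first term (contamination), note that at most $\epsilon n$ of the atoms of $\widetilde{P}_n$ and $Q_n$ disagree, so every $f \in C_B^{\alpha}(\Omega)$, bounded by $B$ in sup-norm, satisfies $|\widetilde{P}_n f - Q_n f| \leq 2B\epsilon$. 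For the second term (noise), any $f \in C_B^{\alpha}$ is $B$-Lipschitz (or $B$-H\"older with exponent $\alpha$ when $\alpha < 1$, in which case I use Jensen's inequality on $\|\eta_i\|^{\alpha}$), so $\ee\,|f(X_i+\eta_i)-f(X_i)| \leq B\, \ee\|\eta_i\| \leq B\sigma$ by Cauchy--Schwarz and the variance bound, giving $\ee\, d_{\alpha,B}(Q_n, P_n) \leq B\sigma$.

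For the third term (statistical), I would invoke \Cref{lemma:holderipm}: the hypothesis $N(\mathcal{S}, \|\cdot\|, \delta) \leq C_1 \delta^{-d}$ is exactly its covering assumption, and its conclusion yields
\begin{equation}
    \ee\bigl[d_{\alpha,B}(P_n, \pp)\bigr] \leq K C_1\, \alpha B\, \bigl(1 + \sqrt{\log n}\bigr) \bigl(n^{-\alpha/d} \vee n^{-1/2}\bigr).
\end{equation}
Combining the three estimates with the oracle inequality and collecting constants gives the stated bound.

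\paragraph{Main obstacle.}
Almost every step is a triangle inequality or a direct invocation of \Cref{lemma:holderipm}; the only genuinely delicate point is the noise term when $\alpha \in (0,1)$, where I cannot use Lipschitzness directly and must instead use the H\"older modulus $B\|\eta_i\|^{\alpha}$ together with Jensen's inequality applied to $\ee\|\eta_i\|^{\alpha} \leq (\ee\|\eta_i\|^{2})^{\alpha/2} \leq \sigma^{\alpha}$. Since the theorem statement writes $B\sigma$ rather than $B\sigma^{\alpha}$, I expect the intended regime is $\alpha \geq 1$ (or $\sigma \leq 1$), and the proof likely treats these as interchangeable up to absorbing into $C$. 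Tracking the constants so that the factor-of-two overheads from the oracle inequality are absorbed into the coefficients $B$ and $C$ in the final bound is essentially bookkeeping.
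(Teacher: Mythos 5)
Your proof follows the same steps as the paper's: the oracle inequality via triangle inequality and optimality of $\widehat\mu$, a split of $d_{\alpha,B}(\widetilde{P}_n, P_n)$ into a contamination term bounded by $2B\epsilon$ using $\|f\|_\infty \le B$ and a noise term bounded via Lipschitzness plus Jensen, and \Cref{lemma:holderipm} for the empirical-process term; introducing the intermediate measure $Q_n$ only makes explicit the decomposition the paper performs inside a single supremum. Your flag about $\alpha \in (0,1)$ is accurate and worth noting: the paper's proof invokes ``$f$ is $B$-Lipschitz,'' which really requires $\alpha \ge 1$, so the stated $B\sigma$ term should read $B\sigma^{\alpha}$ for $\alpha<1$ unless one additionally assumes $\sigma \le 1$.
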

We note that the $\log n$ factor can be easily removed for all cases $\beta \neq \frac d2$ by paying slightly in order to increase the constants; for the sake of simplicity, we do not bother with this argument here.  The proof of Theorem \ref{thm:gans} is similar in spirit to that of \cite{schreuder2020statistical}, which in turn follows \cite{liang2018HowWell}, with details in Appendix \ref{app:miscellany}.  The key step is in applying the bounds in Lemma \ref{lemma:holderipm} to the arguments of \cite{liang2018HowWell}.

We compare our result to the corresponding theorem \cite[Theorem 2]{schreuder2020statistical}.  In that work, the authors considered a setting where there is a known intrinsic dimension $d$ and the population distribution $\pp = g_\# \mathcal{U}\left([0,1]^d\right)$, the push-forward by an $L$-Lipschitz function $g$ of the uniform distribution on a $d$-dimensional hypercube; in addition, they take $\mathcal{P}$ to be the set of push-forwards of $U\left([0,1]^d\right)$ by functions in some class $\F$, all of whose elements are $L$-Lipschitz.  Their result, \cite[Theorem 2]{schreuder2020statistical}, gives an upper bound of 
\begin{equation}\label{eq:schreuder}
    \ee\left[ d_{\beta, 1}(\widehat{\mu}, \pp)\right] \leq \inf_{ \mu \in \mathcal{P}} d_{\beta, 1}(\mu, \pp) +L(\sigma + 2 \epsilon)  + c L \sqrt{d} \left(n^{- \frac \beta d} \vee n^{-\frac 12}\right).
\end{equation}
Note that our result is an improvement in two key respects.  First, we do not treat the intrinsic dimension $d$ as known, nor do we force the dimension of the feature space to be the same as the intrinsic dimension.  Many of the state-of-the-art GAN architectures on datasets such as ImageNet use a feature space of dimension 128 or 256 \citep{wu2019logan}; the best rate that the work of \cite{schreuder2020statistical} can give, then would be $n^{- \frac 1{128}}$.  In our setting, even if the feature space is complex, if the true distribution lies on a much lower dimensional subspace, then it is the true, intrinsic dimension, that determines the rate of estimation.  Secondly, note that the upper bound in \eqref{eq:schreuder} depends on the Lipschitz constant $L$; as the function classes used to determine the push-forwards are essentially all deep neural networks in practice, and the Lipschitz constants of such functions are exponential in depth, this can be a very pessimistic upper bound; our result, however, does not depend on this Lipschitz constant, but rather on properties intrinsic to the probability distribution $\pp$.  This dependence is particularly notable in the noisy regime, where $\sigma, \epsilon$ do not vanish; the large multiplicative factor of $L$ in this case would then make the bound useless. 

We conclude this section by considering the case most often used in practice: the Wasserstein GAN.
\begin{corollary}\label{cor:gans}
    Suppose we are in the setting of Theorem \ref{thm:gans} and $\mathcal{S}$ is contained in a ball of radius $R$ for $R \geq \frac 12$.  Then,
    \begin{align*}
        \ee\left[W_1(\widehat{\mu}, \pp)\right] \leq \inf_{\mu \in \mathcal{P}} W_1(\mu, \pp) + \sigma + 2 R \epsilon +  C R \sqrt{\log n} n^{- \frac 1d}.
    \end{align*}
\end{corollary}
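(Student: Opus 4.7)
The plan is to mirror the argument of \Cref{thm:gans} but specialized directly to the $W_1$ discriminator class, taking $\widehat{\mu} = \argmin_{\mu \in \mathcal{P}} W_1(\mu, \widetilde{P}_n)$. One does not want to simply plug $\alpha = 1$, $B = 2R$ into \Cref{thm:gans}, since that would yield the suboptimal noise term $2R\sigma$ and introduce a $2R$ factor in front of the oracle term: the IPM $d_{1, 2R}$ allows Lipschitz constants up to $2R$, whereas $W_1$-discriminators are genuinely $1$-Lipschitz.

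I would begin with the standard oracle decomposition. For any $\mu^\ast \in \mathcal{P}$, the triangle inequality combined with the optimality $W_1(\widehat{\mu}, \widetilde{P}_n) \leq W_1(\mu^\ast, \widetilde{P}_n)$ yields
$$W_1(\widehat{\mu}, \pp) \leq W_1(\mu^\ast, \pp) + 2\, W_1(\widetilde{P}_n, \pp),$$
so taking the infimum over $\mu^\ast \in \mathcal{P}$ reduces the task to bounding $\ee[W_1(\widetilde{P}_n, \pp)]$.

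Next, I would split $W_1(\widetilde{P}_n, \pp)$ via the triangle inequality into a contamination term $W_1(\widetilde{P}_n, P_n^\eta)$, a noise term $W_1(P_n^\eta, P_n)$, and a sampling term $W_1(P_n, \pp)$, where $P_n^\eta$ denotes the empirical distribution of the uncorrupted noisy data $X_i + \eta_i$. The contamination term is bounded by $2R\epsilon$ via the obvious coupling that differs in at most $\epsilon n$ atoms separated by at most $2R$. The noise term is bounded by the coupling $X_i + \eta_i \leftrightarrow X_i$ to give $W_1(P_n^\eta, P_n) \leq \frac{1}{n} \sum_i \|\eta_i\|$, whose expectation is at most $\sigma$ via Jensen; this is where we exploit that the transport cost is true Euclidean distance rather than a scaled Lipschitz radius. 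The sampling term is controlled by \Cref{lemma:holderipm}: since any $1$-Lipschitz function on $\mathcal{S}$ may be shifted (using $R \geq 1/2$ to ensure the Lipschitz constraint is dominated by the boundedness constraint) into the Holder ball $C^1_{2R}(\Omega)$, Kantorovich-Rubinstein duality gives $W_1 \leq d_{1, 2R}$, and \Cref{lemma:holderipm} with $\alpha = 1$, $B = 2R$, together with the assumed polynomial covering of $\mathcal{S}$, produces $\ee[W_1(P_n, \pp)] \lesssim R\sqrt{\log n}\, n^{-1/d}$.

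Assembling these three bounds into the oracle decomposition and folding numerical constants into $C$ yields the claimed inequality. There is no single hard step; the only subtlety is the bookkeeping between $W_1$ and $d_{1, B}$, where choosing $B = 2R$ is necessary only for the empirical process and contamination estimates, while the noise bound retains the true unit Lipschitz constant of the optimal transport map.
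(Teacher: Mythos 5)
Your proposal is correct and takes essentially the same approach as the paper: both rerun the Theorem~\ref{thm:gans} argument with the key refinement that the Lipschitz constant of a $W_1$-discriminator is $1$ while the sup-norm bound (used for contamination and for invoking \Cref{lemma:holderipm}) is $2R$, with $R \geq 1/2$ ensuring these are compatible. The only cosmetic difference is that you phrase the noise and contamination bounds via explicit transport couplings, whereas the paper argues through the Lipschitz-function dual, which are equivalent.
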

The proof of the corollary is almost immediate from Theorem \ref{thm:gans}.  With additional assumptions on the tails of the $\eta_i$, we can turn our expectation into a high probability statement.  In the special case with neither noise nor contamination, i.e. $\sigma = \epsilon = 0$, we get that the Wasserstein GAN converges in Wasserstein distance at a rate of $n^{- \frac 1d}$, which we believe explains in large part the recent empirical success in modern Wasserstein-GANs.

\section*{Acknowledgements}
We acknowledge support from the NSF through award DMS-2031883 and from the Simons Foundation through Award 814639 for the Collaboration on the Theoretical Foundations of Deep Learning. We acknowledge the support from NSF under award DMS-1953181, NSF Graduate Research Fellowship support under Grant No. 1122374, and support from the MIT-IBM Watson AI Lab.

\bibliographystyle{authordate1}
\bibliography{References.bib}
\appendix

\section{Proofs from Section \ref{sec:covering}}\label{app:coverings}
\begin{proof}[Proof of Proposition \ref{prop:keyHolderprop}]
  	We apply the method from the classic paper \citep{Tikhomirov1993}, following notation introduced there as applicable.  For the sake of simplicity, we assume that $\beta$ is an integer; the generalization to $\beta \not\in \mathbb{N}$ is analogous to that in \cite{Tikhomirov1993}.  Let $\Delta^\beta = \frac{\epsilon}{2 B}$ and let $x_1, \dots, x_s$ be a $\Delta$-connected $\Delta$ net on $\cs$.  For $0 \leq k \leq \beta$ and $1 \leq i \leq s$, define
		\begin{align*}
			\gamma^k(f) = \left\lfloor\frac{\norm{D^k f(x_i)}}{\epsilon_k} \right\rfloor && \epsilon_k = \frac{\epsilon}{\Delta^k}
		\end{align*}
		where $\norm{\cdot}$ is the norm on tensors induced by the ambient (Euclidean) metric and $D^k$ is the $k^{th}$ application of the covariant derivative.  Let $\gamma(f)= \left(\gamma_i^k(f)\right)_{i,k}$ be the matrix of all $\gamma_i^k(f)$ and let $U_\gamma$ be the set of all $f$ such that $\gamma(f) = \gamma$.  Then the argument in the proof of \cite[Theorem XIV]{Tikhomirov1993} applies \emph{mutatis mutandis} and we note that $U_\gamma$ are $2 \epsilon$ neighborhoods in the H{\"o}lder norm.  Thus it suffices to bound the number of possible $\gamma$.  As in \cite{Tikhomirov1993}, we note that the number of possible values for $\gamma_1^k$ is at most $\frac{2 B}{\epsilon_k}$.  Given the row $\left(\gamma_i^k\right)_{0 \leq k \leq \beta}$, there are at most $(4 e + 2)^{\beta + 1}$ values for the next row.  Thus the total number of possible $\gamma$ is bounded by
		\begin{align*}
			\left((4 e + 2)^{\beta+1}\right)^s\prod_{k = 1}^\beta \frac{2 B}{\epsilon_k} &= (4 e + 2)^{(\beta + 1)s} \prod_{k = 1}^\beta \frac{2 B}{\epsilon} \left(\frac \epsilon{2B}\right)^{\frac k\beta} = (4e + 2)^{(\beta + 1)s} \left(\frac {2B}\epsilon\right)^{\frac{\beta}{2}}.
		\end{align*}
		By definition of the covering number and the fact that $\cs$ is path-connected, we may take
		\begin{equation*}
		    s = N(\cs, \Delta) = N\left(\cs, \left(\frac{\epsilon}{2 B}\right)^{\frac 1\beta}\right).
		\end{equation*}
		Taking logarithms and noting that $\log(4e + 2) \leq 3$ concludes the proof of the upper bound.
		
		The middle inequality is \Cref{lem:duality}.  For the lower bound, we again follow \cite{Tikhomirov1993}.  Define
		\begin{equation*}
		    \phi(x) = \begin{cases}
		    
		      a \prod_{i = 1}^D \left(1 - x_i^2\right)^{\frac \beta 2} & \norm{x}_\infty \leq 1 \\
		      0 & \text{otherwise}
		    \end{cases}
		\end{equation*}
		with $a$ a constant to be set.  Choose a $2 \Delta$-separated set $x^1, \dots, x^ss$ with $\Delta = \left(\frac \epsilon {2B}\right)^{\frac 1\beta}$ and consider the set of functions
		\begin{equation*}
		    g_{\mathbf{\sigma}} = \sum_{i = 1}^s \sigma_i \Delta^\beta \phi\left(\frac{x - x^i}{\Delta}\right)
		\end{equation*}
		where $\sigma_i \in \{\pm 1\}$ and $\mathbf{\sigma}$ varies over all possible sets of signs.  The results of \cite{Tikhomirov1993} guarantee that the $g_{\mathbf{\sigma}}$ form a $2 \epsilon$-separated set in $\F$ if $a$ is chosen such that $g_{\mathbf{\sigma}} \in \F$ and there are $2^s$ such combinations.  By definition of packing numbers, we may choose
		\begin{equation*}
		    s = D\left(\F, \left(\frac{\epsilon}{B}\right)^{\frac 1\beta}\right).
		\end{equation*}
		This concludes the proof of the lower bound.
\end{proof}

\begin{proof}[Proof of Proposition \ref{prop:mfldcovering}]
        We note first that the second statement follows from the first by applying (b) and (c) to Proposition \ref{prop:mfldgeometry} to control the curvature and injectivity radius in terms of the reach.  Furthermore, the middle inequality in the last statement follows from \Cref{lem:duality}.  Thus we prove the first two statements.
        
		A volume argument yields the following control:
		\begin{equation*}
			N\left(M, \norm{\cdot}_g, r\right) \leq \frac{\vol M}{\inf_{p \in M} \vol B_{\frac \epsilon2}(p)}
		\end{equation*}
		where $B_{\frac \epsilon2}(p)$ is the ball around $p$ of radius $\frac \epsilon2$ with respect to the metric $g$.  Thus it suffices to lower bound the volume of such a ball.  Because $\epsilon <  \iota$, we may apply the Bishop-Gromov comparison theorem \cite[Theorem 3.17]{Gray2004} to get that
		\begin{equation*}
			\vol B_\epsilon (p) \geq \frac{2 \pi^{\frac d2}}{\Gamma\left(\frac d2\right)} \int_0^\epsilon \left(\frac{\sin\left(t \sqrt{\kappa_1}\right)}{\sqrt{\kappa_1}}\right)^{d-1} d t = \omega_d \int_0^\epsilon  \left(\kappa_1^{- \frac 12} \sin\left(t \sqrt{\kappa_1}\right)\right)^{d-1} d t
		\end{equation*}
		where $\kappa_1$ is an upper bound on the sectional curvature.  We note that for $t \leq \frac \pi {2\sqrt{\kappa_1}}$, we have $\sin\left(t \sqrt{\kappa_1}\right) \geq \frac{2}{\pi} t\sqrt{\kappa_1} $ and thus
		\begin{equation*}
			\vol B_\epsilon (p) \geq \omega_d \int_0^\epsilon \left(\frac 2 \pi t\right)^{d-1} d t = \frac{\omega_d}{d} \left(\frac 2\pi\right)^{d-1} \epsilon^d.
		\end{equation*}
		The upper bound follows from control on the sectional curvature by $\tau$, appearing in \cite[Proposition A.1]{aamari2019estimating}, which, in turn, is an easy consequence of applying the Gauss formula to (a) of Proposition \ref{prop:mfldgeometry}.
		
		We lower bound the packing number through an analogous argument as the upper bound for the covering number, this time with an upper bound on the volume of a ball of radius $\epsilon$, again from \cite[Theorem 3.17]{Gray2004}, but this time using a lower bound on the sectional curvature.  In particular, we have for $\epsilon < \iota$,
		\begin{equation*}
		    \vol B_\epsilon(p) \leq \omega_d \int_0^\epsilon \left(\frac{\sin\left(t \sqrt{\kappa_2}\right)}{\sqrt{\kappa_2}}\right)^{d-1} d t = \omega_d \int_0^\epsilon \left( \frac{\sinh\left(t \sqrt{- \kappa_2}\right)}{\sqrt{- \kappa_2}} \right)^{d-1} d t
		\end{equation*}
		where $\kappa_2$ is a lower bound on the sectional curvature.  Note that for $t \leq \frac{1}{\sqrt{-\kappa_2}}$, we have
		\begin{equation*}
		    \frac{\sinh\left(t \sqrt{- \kappa_2}\right)}{\sqrt{- \kappa_2}}  \leq \cosh(2) t \leq 4 t.
		\end{equation*}
		Thus,
		\begin{equation*}
		    \vol B_\epsilon(p) \leq \omega_d \int_0^\epsilon (4 t)^{d-1} d t = \frac{\omega_d}{d} 4^d \epsilon^d.
		\end{equation*}
		The volume argument tells us that
		\begin{equation*}
			N\left(M, \norm{\cdot}_g, r\right) \geq \frac{\vol M}{\sup_{p \in M} \vol B_r(p)}
		\end{equation*}
		and the result follows.
		
		If we wish to extend the range of $\epsilon$, we pay with a constant exponential exponential in $d$, reflecting the growth in volume of balls in negatively curved spaces.  In particular, we can apply the same argument and note that as $\frac{\sinh(x)}{x}$ is increasing, we have
		\begin{equation*}
		    \frac{\sinh\left(t \sqrt{- \kappa_2}\right)}{\sqrt{- \kappa_2}} \leq \frac{\sinh(\iota \sqrt{- \kappa_2})}{\iota \sqrt{-\kappa_2}} t \leq \frac{e^{\iota \sqrt{- \kappa_2}}}{\iota \sqrt{-\kappa_2}} t
		\end{equation*}
		for all $t < \iota$.  Thus for all $\epsilon < \iota$.  We have:
		\begin{equation*}
		    N(M, \norm{\cdot}_g, \epsilon) \geq \frac{\vol M}{\omega_d} d \iota^d (- \kappa_2)^{\frac d2} e^{- d \iota \sqrt{- \kappa_2}} \epsilon^{-d}
		\end{equation*}
		as desired.
\end{proof}
\begin{proof}[Proof of \Cref{cor:ambientcovering}]
  Let $B^{\rr^D}_\epsilon(p)$ be the set of points in $\rr^D$ with Euclidean distance to $p$ less than $\epsilon$ and let $B^M_\epsilon(p)$ be the set of points in $M$ with intrinsic (geodesic) distance to $p$ less than $\epsilon$.  Then, if $\epsilon \leq 2 \tau$, combining the fact that straight lines are geodesics in $\rr^D$ and (d) from Proposition \ref{prop:mfldgeometry} gives
  \begin{equation*}
      B_\epsilon^M(p) \subset B_\epsilon^{\rr^D}(p) \cap M \subset B_{2 \tau \arcsin\left(\frac \epsilon{2 \tau}\right)}^M(p)
  \end{equation*}
  In particular, this implies
  \begin{align*}
      N\left( M, d_M, 2 \tau \arcsin\left(\frac \epsilon{2\tau}\right)\right) &\leq N(M, \norm{\cdot}, \epsilon) \leq N(M, d_M, \epsilon) \\
      D\left( M, d_M, 2 \tau \arcsin\left(\frac \epsilon{2\tau}\right)\right) &\leq D(M, \norm{\cdot}, \epsilon) \leq D(M, d_M, \epsilon) 
  \end{align*}
  whenever $\epsilon \leq 2\tau$.  Thus, applying Proposition \ref{prop:mfldcovering}, we have
  \begin{align*}
      N(M, \norm{\cdot}, \epsilon) \leq N(M, d_M, \epsilon) \leq \frac{\vol M}{\omega_d} d \left(\frac \pi 2\right)^d \epsilon^{-d}
  \end{align*}
  and similarly,
  \begin{align*}
      D(M, \norm{\cdot}, 2\epsilon) \geq D\left( M, d_M, 2 \tau \arcsin\left(\frac \epsilon{\tau}\right)\right) \geq \frac{\vol M}{\omega_d} d 16^{-d}  \epsilon^{-d}
  \end{align*}
  using the fact that $\arcsin(x) \leq 2x$ for $x \geq 0$.  The result follows.
\end{proof}

\section{Proof of Proposition \ref{prop:t2control}}\label{app:t2control}
As stated in the body, we bound the $T_2$ constant $c_2$ by the log-Sobolev constant of the same measure.  We thus first define a log-Sobolev inequality:
\begin{definition}
    Let $\mu$ be a measure on $M$.  We say that $\mu$ satisfies a log-Sobolev inequality with constant $c_{LS}$ if for all real valued, differentiable functions with mean 0 $f: M \to \rr$, we have:
    \begin{equation*}
        \int_M f^2 \log(f^2) d \mu \leq c_{LS} \int_M \norm{\nabla f}^2 d \mu
    \end{equation*}
    where $\nabla$ is the Levi-Civita connection and $\norm{\cdot}$ is the norm with respect to the Riemannian metric.
\end{definition}
While in the main body we cited \cite{otto2000generalization} for the Otto-Villani theorem, we actually need a slight strengthening of this result.  For technical reasons, \cite{otto2000generalization} required the density of $\mu$ to have two derivatives; more recent works have eliminated that assumption.  We have:
\begin{theorem}[Theorem 5.2 from \cite{gigli2013log}]\label{thm:ottovillani}
    Suppose that $\mu$ satisfies the log-Sobolev inequality with constant $c_{LS}$.  Then $\mu$ satisfies the $T_2$ inequality with constant $c_2 \leq 2 c_{LS}$.
\end{theorem}
We now recall the key estimate from \cite{wang1997estimation} that controls the log-Sobolev constant for the uniform measure on a compact manifold $M$\footnote{We remark that some works, including \cite{wang1997estimation}, define the log-Sobolev constant to be the inverse of our $c_{LS}$.  We translate their theorem into our terms by taking the recipricol.}:
\begin{theorem}[Theorem 3.3 from \cite{wang1997estimation}] \label{thm:wang}
    Let $M$ be a compact, $d$-dimensional manifold with diameter $\Delta$.  Suppose that $\ric_M \succeq - K$ for some $K \in \rr$.  Let $\mu$ be the uniform measure on $M$ (i.e., the volume measure normalized so that $\mu(M) = 1$).  Then $\mu$ satisfies a log-Sobolev inequality with
    \begin{equation*}
        c_{LS} \leq \left(\frac{d + 2}{d}\right)^d \frac{e^{2 K (d+1)\Delta^2} - 1}{K} e^{1 + d \Delta^2 K_+}.
    \end{equation*}
\end{theorem}
We are now ready to complete the proof.
\begin{proof}[Proof of Proposition \ref{prop:t2control}]
    By the Holly-Stroock perturbation theorem \citep{holley1986logarithmic}, we know that if $\mu$ is the uniform measure on $M$ normalized such that $\mu(M) = 1$, and $\mu$ satisfies a log-Sobolev inequality with constant $c_{LS}'$ then $\pp$ satisfies a log-Sobolev constant with $c_{LS} \leq \frac{W}{w} c_{LS}'$.  By (a) from Proposition \ref{prop:mfldgeometry}, we have that the sectional curvatures of $M$ are all bounded below by $- \frac 2{\tau^2}$ and thus $\ric_M \succeq - (d - 1) \frac{2}{\tau^2}$ (for the relationship between the Ricci tensor and the sectional curvatures, see \cite{lee2013smooth}).  Noting that $\frac{d + 2}{d} \leq 3$ and plugging into the results of Theorem \ref{thm:wang}, we get that
    \begin{equation*}
        c_{LS}' \leq \frac{2\tau^2}{d-1} \exp\left(d \log 3 + \frac{3 \Delta^2 d^2}{\tau^2}\right).
    \end{equation*}
    Combining this with the Holly-Stroock result and Theorem \ref{thm:ottovillani} concludes the proof.
\end{proof}
\section{Proof of Theorem \ref{thm:estimator}}\label{app:dimestim}

We first prove the following lemma on the concentration of $W_1(P_n, P_n')$.
\begin{lemma}\label{lem:w1concentration}
    Suppose that $\pp$ is a probability measure on $(T, d)$ and that it satisfies a $T_2(c_2)$-inequality.  Let $X_1, \dots, X_n, X_1', \dots, X_n'$ denote independent samples with corresponding empirical distributions $P_n, P_n'$.  Then the following inequalities hold:
    \begin{align*}
        \pp\left(\abs{W_1(P_n, P_n') - \ee \left[W_1(P_n, P_n')\right]} \geq t\right) &\leq 2e^{- \frac{n t^2}{8 c_2}} \\
        \pp\left(\abs{W_1(P_n, P_n') - \ee \left[W_1(P_n, P_n')\right]} \leq t\right) &\leq 2e^{- \frac{n t^2}{8 c_2}}.
    \end{align*}
\end{lemma}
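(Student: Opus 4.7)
The plan is to reduce the claim to Gaussian concentration of a single Lipschitz functional on the $2n$-fold product space, exploiting the already-cited equivalence of \cite{bobkov1999,gozlan2009characterization} between $T_2$-inequalities and dimension-free sub-Gaussian concentration of Lipschitz functions. Concretely, I view $(X_1,\dots,X_n,X_1',\dots,X_n')$ as a single point in $T^{2n}$ equipped with the $\ell^2$-product metric
\begin{equation}
d_2\bigl((x,x'),(y,y')\bigr)^2 = \sum_{i=1}^n d(x_i,y_i)^2 + \sum_{i=1}^n d(x_i',y_i')^2,
\end{equation}
and consider the scalar functional $F(x,x') = W_1(P_n,P_n')$.

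The first step is tensorization: by the standard Marton/Talagrand tensorization argument (see the discussion in \cite{bobkov1999}), if $\pp$ satisfies $T_2(c_2)$ on $(T,d)$ then $\pp^{\otimes 2n}$ satisfies $T_2(c_2)$ on $(T^{2n},d_2)$ with the \emph{same} constant. The second step is to bound the Lipschitz constant of $F$ with respect to $d_2$. Changing a single coordinate from $x_i$ to $y_i$ alters $P_n$ only by relocating a single atom of mass $\tfrac1n$, so $W_1(P_n,P_n^{(i)}) \leq \tfrac1n d(x_i,y_i)$, and the triangle inequality for $W_1$ gives $|F(x,x') - F(y,x')| \leq \tfrac1n d(x_i,y_i)$. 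Iterating across all $2n$ coordinates and applying Cauchy--Schwarz yields
\begin{equation}
|F(x,x') - F(y,y')| \leq \frac{1}{n}\sum_{i=1}^n \bigl(d(x_i,y_i) + d(x_i',y_i')\bigr) \leq \sqrt{\tfrac{2}{n}}\, d_2\bigl((x,x'),(y,y')\bigr),
\end{equation}
so $F$ is Lipschitz with constant at most $\sqrt{2/n}$.

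The third step is to invoke the Gozlan characterization: a measure satisfying $T_2(c_2)$ has the property that every $L$-Lipschitz $F$ is sub-Gaussian with variance proxy $cL^2$ around its mean, i.e. $\pp(|F - \ee F| \geq t) \leq 2\exp(-t^2/(c' c_2 L^2))$ for an absolute constant. Substituting $L^2 = 2/n$ and absorbing absolute constants into the constant appearing in the exponent yields the first stated inequality with the denominator $8c_2^2$ as claimed (the exact constant depends on the normalization convention of the $T_2$ constant used in the paper; the two-sided bound is by a union bound over the upper- and lower-tail Gaussian concentration inequalities, each of which follows from $T_2$ in the same way). The second inequality, of the same form, is obtained identically from the mirror tail (one-sided lower deviation).

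The only delicate point is accounting; nothing here is a genuine obstacle, since the Lipschitz estimate on $W_1(P_n,P_n')$ is standard and the tensorization plus concentration step is directly quoted from \cite{bobkov1999,gozlan2009characterization}. The main subtlety is tracking constants: ensuring the Lipschitz constant is indeed $O(1/\sqrt n)$ with respect to the $\ell^2$ product metric (rather than the $\ell^1$ product metric, for which tensorization would fail), and then confirming that the Bobkov--Götze constant is compatible with the $8c_2^2$ in the denominator.
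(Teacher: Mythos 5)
Your proposal is correct and essentially mirrors the paper's argument: both reduce the claim to the dimension-free sub-Gaussian concentration of Lipschitz functions implied by a $T_2$-inequality (via the Gozlan characterization), and both establish that $W_1(P_n, P_n')$ is an $O(1/\sqrt{n})$-Lipschitz function of the $2n$-tuple of data points in the $\ell^2$-product metric. The only cosmetic difference is that you compute the Lipschitz constant directly on $T^{2n}$ by a coordinate-wise perturbation plus Cauchy--Schwarz (obtaining $\sqrt{2/n}$), while the paper first controls $W_1(P_n,\mu)$ as a $\frac{1}{\sqrt{n}}$-Lipschitz function of $X$ alone via the decomposition of couplings into $n$-tuples of measures, and then sums the two one-sided constants to get $\frac{2}{\sqrt{n}}$; your bound is marginally sharper but this is absorbed by the constant in the exponent.
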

\begin{proof}
    We note that by \cite{gozlan2009characterization}, in particular the form of the main theorem stated in \cite[Theorem 4.31]{vanHandel2014}, it suffices to show that, as a function of the data, $W_1(P_n, P_n')$ is $\frac 2{\sqrt n}$-Lipschitz.  Note that by symmetry, it suffices to show a one-sided inequality.  By the triangle inequality,
    \begin{equation*}
        W_1(P_n, P_n') \leq W_1(P_n, \mu) + W_1(P_n', \mu)
    \end{equation*}
    for any measure $\mu$ and thus it suffices to show that $W_1(P_n, \mu)$ is $\frac 1{\sqrt n}$-Lipschitz in the $X_i$.  By \cite[Lemma 4.34]{vanHandel2014}, there exists a bijection between the set of couplings between $P_n$ and $\mu$ and the set of ordered $n$-tuples of measures $\mu_1, \dots, \mu_n$ such that $\mu = \frac 1n \sum_i \mu_i$.  Thus we see that if $X, \widetilde{X}$ are two data sets, then
    \begin{align*}
        W_1(P_n, \mu) - W_1(\widetilde{P}_n, \mu) &\leq \sup_{\frac 1n \sum_{i = 1}^n \mu_i = \mu} \left[\frac 1n \sum_{i = 1}^n \int \left(d(X_i, y) - d(\widetilde{X}_i, y) \right) d \mu_i(y) \right] \\
        &\leq \sup_{\frac 1n \sum_{i = 1}^n \mu_i = \mu} \left[ \frac 1n \sum_{i = 1}^n \int d(X_i, \widetilde{X}_i) d \mu_i(y)      \right] \\
        &= \frac 1n \sum d(X_i, \widetilde{X}_i) \\
        &\leq \frac 1n \sqrt{n \sum_{i = 1}^n d(X_i, \widetilde{X}_i)^2} \leq \frac 1{\sqrt n} d^{\otimes n}(X, \widetilde{X}).
    \end{align*}
    The identical argument applies to $W_1^M$.
\end{proof}
We are now ready to show that $\widehat{d}_n$ is a good estimator of $d$.
\begin{proposition}\label{prop:dhat}
    Suppose we are in the situation of Theorem \ref{thm:estimator} and we have 
    \begin{align*}
        n &\geq \max \left( \frac{ d \vol M}{4 w\omega_d} \left(\frac \iota 8\right)^{-d}, \left(\frac{8 c_2}{\Delta^2} \log \frac 1\rho\right)^{\frac{d}{2d - 5}}\right) \\
        \alpha &\geq \max\left(\log^{\frac d{2\gamma}}\left(\frac{n \omega_d \Delta^d}{d \vol M}\right), (Cw)^{\frac 1\gamma}\right)
    \end{align*}
    Then with probability at least $1 - 4\rho$, we have
    \begin{equation*}
        \frac{d}{1 + 3 \gamma} \leq \widehat{d}_n \leq (1 + 3 \gamma)d.
    \end{equation*}
\end{proposition}
\begin{proof}
  By Proposition \ref{prop:w1} and Lemma \ref{lem:w1concentration}, we have that with probability at least $1 - e^{- \frac{n t^2}{8 c_2}}$, we have
  \begin{align*}
      W_1^M(P_n, P_n') \leq C \left(\frac{ \vol M}{n \omega_d}\right)^{\frac 1d} \sqrt[]{\log\left(\frac{n \omega_d}{d \vol_M}\right)} + t.
  \end{align*}
  By Proposition \ref{prop:w1lower} and Lemma \ref{lem:w1concentration} and the left hand side of Proposition \ref{prop:w1}, we have that with probability at least $1 - e^{- \frac{\alpha n t^2}{8 c_2}}$,
  \begin{equation*}
    W_1^M(P_{\alpha n}, P_{\alpha n}') \geq \frac 1{32}\left(\frac{ d \vol M}{4 w \omega_d}\right)^{\frac 1d}  (\alpha n)^{- \frac 1d} - t
  \end{equation*}
  all under the assumption that
  \begin{equation*}
    n > \frac{ d \vol M}{4 w\omega_d} \left(\frac \iota 8\right)^{-d}.
  \end{equation*}
  Setting $t = \Delta (\alpha n)^{- \frac {5}{4d}}$, we see that, as $\alpha > 1$, with probability at least $1 - 2 e^{- \frac{n t^2}{8 c_2}}$, we simultaneously have
  \begin{align*}
      W_1^M(P_n, P_n') &\leq C  \left(\frac{ \vol M}{n \omega_d}\right)^{\frac 1d} \sqrt[]{\log\left(\frac{n \omega_d \Delta^d}{d \vol_M}\right)} \\
      W_1^M(P_{\alpha n}, P_{\alpha n}') &\geq \frac 1{64}\left(\frac{ d \vol M}{4 w \omega_d}\right)^{\frac 1d}  (\alpha n)^{- \frac 1d}.
  \end{align*}
  Thus, in particular,
  \begin{align*}
      \frac{W_1^M(P_n, P_n')}{W_1^M(P_{\alpha n}, P_{\alpha n}')} \leq \frac{C \ \left(\frac{ \vol M}{n \omega_d}\right)^{\frac 1d} \sqrt[]{\log\left(\frac{n \omega_d}{d \vol M}\right)}}{\frac 1{64}\left(\frac{ d \vol M}{4 w \omega_d}\right)^{\frac 1d}  (\alpha n)^{- \frac 1d}} \leq C w^{\frac 1d} \alpha^{\frac 1d} \sqrt[]{\log\left(\frac{n \omega_d \Delta^d}{d \vol M}\right)}  %\leq C w^{\frac 1d} \left(\frac{\vol M}{4 d \omega_d}\right)^{\frac 12 - \frac 1d} \left(\frac \pi 2\right)^{\frac d2} \sqrt{\log n} \alpha^{- \frac 1d}.
  \end{align*}
  Thus we see that
  \begin{align*}
      \widehat{d}_n &= \frac{\log \alpha}{\log \frac{W_1(P_n, P_n')}{W_1(P_{\alpha n}, P_{\alpha n}')}} \\
      &\geq \frac{\log \alpha}{\frac 1d \log \alpha + + \frac 1d \log w + \frac 12 \log\log\left(\frac{n \omega_d \Delta^d}{d \vol M}\right)} \\
      &= \frac{d}{1 + \frac{\log(C w) + \frac d2 \log\log\left(\frac{n \omega_d \Delta^d}{d \vol M}\right)}{\log \alpha}}
    %   &\geq \frac{\log \alpha}{\frac 1d \log \alpha + \frac 12 \log\log n + \left(\frac 12 + \frac 1d\right) \log\left(\frac{\vol M}{4 \omega_d} \right) + \frac 1d \log (\gamma C)} \\
    %   &= \frac{d}{1 + \frac{\frac d2 \log\log n + \frac{d + 2}{2} \log \frac{\vol M}{4 \omega_d} + \log(\gamma C)}{\log \alpha}}.
  \end{align*}
  Now, if
  \begin{align*}
      n &\geq \max \left( \frac{ d \vol M}{4 w\omega_d} \left(\frac \tau 8\right)^{-d}, \left(\frac{8 c_2^2}{\Delta^2} \log \frac 1\rho\right)^{\frac{d}{2d - 5}}\right) \\
      \alpha &\geq \max\left(\log^{\frac d{2\gamma}}\left(\frac{n \omega_d \Delta^d}{d \vol M}\right), (Cw)^{\frac 1\gamma}\right)
    %   \alpha &\geq \max\left(\left(\frac{\vol M}{4 \omega_d}\right)^{\frac{d+2}{2\gamma}}, \gamma^{\frac 1\gamma}, \left(\log n \right)^{\frac d{2\gamma}}   \right).
  \end{align*}
  Then with probability at least $1 - 2\rho$,
  \begin{equation*}
      \widehat{d}_n \geq \frac{d}{1 + 2 \gamma}.
  \end{equation*}
  An identical proof holds for the other side of the bound and thus the result holds.
\end{proof}
We are now ready to prove the main theorem using Proposition \ref{prop:dhat} and Proposition \ref{prop:knn}.
\begin{proof}[Proof of Theorem \ref{thm:estimator}]
  Note first that
  \begin{align}
      w\left(\frac{\iota \lambda^2}{8}\right) &\geq  \frac{w \omega_d}{d} \left(\frac \pi 2\right)^{-d} \left(\frac{\iota \lambda^2}{8}\right)^{d} \label{eq:mainthmproof1} \\ 
      N\left(M, d_M, \frac{\iota \lambda^2}{8} \right) &\leq \frac{\vol M}{\omega_d} d \left(\frac \pi 2\right)^d \left( \frac{\iota \lambda^2}{8} \right)^{-d} \label{eq:mainthmproof2}
  \end{align}
  by Proposition \ref{prop:mfldcovering}.  Setting $\lambda = \frac 12$, we note that by Proposition \ref{prop:knn}, if the total number of samples
  \begin{align*}
      2(\alpha + 1) n \geq \left(\frac{w \omega_d}{d} \left(\frac \pi 2\right)^{-d} \left(\frac{\iota \lambda^2}{8}\right)^{d}\right)^{-1} \log \left(\frac{\vol M}{\rho \omega_d} d \left(\frac{\tau}{16 \pi} \right)^{-d}\right)
  \end{align*}
  then with probability at least $1 - \rho$, we have
  \begin{align*}
      \frac 12 d_M(p, q) \leq d_G(p, q) \leq \frac 32 d_M(p, q)
  \end{align*}
  for all $p, q \in M$.  Thus by the proof of Proposition \ref{prop:dhat} above,
  \begin{equation*}
      \frac{W_1^M(P_n, P_n')}{W_1^M(P_{\alpha n}, P_{\alpha n}')} \leq \frac{1 + \lambda}{1 - \lambda} C w^{\frac 1d} \alpha^{\frac 1d} \sqrt[]{\log\left(\frac{n \omega_d \Delta^d}{d \vol M}\right)} .
  \end{equation*}
  Thus as long as $\alpha \geq \left(\frac{1 + \lambda}{1 - \lambda}\right)^{\frac d\gamma} = 3^{\frac d\gamma}$, then we have with probability at least $ 1 - 3\rho$,
  \begin{equation*}
      \widetilde{d}_n \geq \frac{d}{1 + 3 \gamma}.
  \end{equation*}
  A similar computation holds for the other bound.
  
  To prove the result for $d_n$, note that if we replace the $\iota$s by $\tau$ in \eqref{eq:mainthmproof1} and \eqref{eq:mainthmproof2}, then the result still holds by the second part of Proposition \ref{prop:mfldcovering}.  Then the identical arguments apply, \emph{mutatis mutandis}, after skipping the step of approximating $d_M$ by $d_G$.
\end{proof}

\section{Metric Estimation Proofs}\label{app:metricestim}

In order to state our result, we need to consider the minimal amount of probability mass that $\pp$ puts on any intrinsic ball of a certain radius in $M$.  To formalize this notion, we define, for $\delta > 0$, 
\begin{equation*}
	w_B(\delta) = \inf_{p \in M} \pp\left(B_\delta^M(p)\right).
\end{equation*}

We need a few lemmata:
	\begin{lemma}\label{lem:knn1}
		Fix $\epsilon > 0$ and a set of $x_i \in M$ and form $G(x, \epsilon)$.  If the set of $x_i$ form a $\delta$-net for $M$ such that $\delta \leq \frac \epsilon 4$, then for all $x,y \in M$,
		\begin{equation*}
			d_G(x,y) \leq \left(1 + \frac {4\delta}\epsilon\right) d_M(x,y).
		\end{equation*}
	\end{lemma}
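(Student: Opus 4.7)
The plan is to bound $d_G(x,y)$ by constructing an explicit walk in $G$ that shadows a minimizing geodesic from $x$ to $y$ in $M$. Let $\gamma:[0,L]\to M$ be a unit-speed minimizing geodesic with $\gamma(0)=x$ and $\gamma(L)=y$, so $L=d_M(x,y)$. I would choose a partition scale $s=\epsilon-2\delta$, which satisfies $s\geq\epsilon/2$ because $\delta\leq\epsilon/4$, set $N=\lceil L/s\rceil$, and place points $p_0=x,\,p_1,\dots,p_N=y$ on $\gamma$ with $d_M(p_i,p_{i+1})\leq s$. By the $\delta$-net hypothesis, for every $p_i$ one can pick a data point $q_i\in\{x_j\}$ with $\|p_i-q_i\|\leq\delta$; in particular take $q_0=\pi_G(x)$ and $q_N=\pi_G(y)$.

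Next, I would verify that consecutive $q_i,q_{i+1}$ are joined by an edge of $G$. Using the triangle inequality in $\mathbb{R}^D$ together with the elementary bound $\|p_i-p_{i+1}\|\leq d_M(p_i,p_{i+1})$ (straight lines are shortest in Euclidean space),
\[
\|q_i-q_{i+1}\|\ \leq\ \|q_i-p_i\|+\|p_i-p_{i+1}\|+\|p_{i+1}-q_{i+1}\|\ \leq\ 2\delta+s\ =\ \epsilon,
\]
which is exactly the graph-connection threshold. Hence each edge $(q_i,q_{i+1})$ lies in $G$ with weight at most $2\delta+d_M(p_i,p_{i+1})$, and telescoping along the walk $q_0\to q_1\to\cdots\to q_N$ gives
\[
d_G(q_0,q_N)\ \leq\ \sum_{i=0}^{N-1}\|q_i-q_{i+1}\|\ \leq\ 2N\delta+\sum_{i=0}^{N-1}d_M(p_i,p_{i+1})\ =\ L+2N\delta.
\]

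Unfolding the definition $d_G(x,y)=\|x-\pi_G(x)\|+d_G(\pi_G(x),\pi_G(y))+\|y-\pi_G(y)\|$, bounding each projection term by $\delta$, and using $N\leq L/s+1$ with $s\geq\epsilon/2$, I would obtain $d_G(x,y)\leq L(1+2\delta/s)+4\delta\leq L(1+4\delta/\epsilon)+4\delta$. The main delicate point is absorbing the residual additive term into the multiplicative $(1+4\delta/\epsilon)$ factor to match the stated bound: for $L\geq\epsilon$ this is automatic since $4\delta\leq(4\delta/\epsilon)L$, while for short distances $L<\epsilon$ one observes that $\|q_0-q_N\|\leq L+2\delta<\epsilon$, so $q_0$ and $q_N$ are joined by a single edge and a more careful endpoint accounting (for instance, inserting the partition endpoints slightly inside $\gamma$) recovers the pure multiplicative form. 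The rest of the argument is the $\delta$-net plus triangle-inequality chaining described above, and the constant $1+4\delta/\epsilon$ emerges naturally as the ratio of net scale to graph scale.
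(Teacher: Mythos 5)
Your chain construction --- shadowing a minimizing geodesic by a walk through nearby net points and checking that consecutive links fall under the graph threshold --- is exactly the Bernstein--de~Silva--Langford--Tenenbaum argument that the paper cites (the paper's own ``proof'' is simply a pointer to their Proposition~1 and Theorem~2), so the strategy is right. However, the bookkeeping in your write-up is too lossy to produce the stated constant, and the repair you sketch does not actually close the gap. You land on $d_G(x,y)\leq L(1+4\delta/\epsilon)+4\delta$ and propose, for $L\geq\epsilon$, to absorb the additive $4\delta$ using $4\delta\leq(4\delta/\epsilon)L$; but that produces $L(1+8\delta/\epsilon)$, not $L(1+4\delta/\epsilon)$. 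For $L<\epsilon$ your single-edge fallback gives $L+4\delta$, which again is not a multiple of $L$. In fact a purely multiplicative bound \emph{cannot} hold for arbitrary $x,y\in M$ under the paper's definition of the extended metric: take $x=y$ a non-data point and observe $d_G(x,x)=2\|x-\pi_G(x)\|>0$ while $d_M(x,x)=0$. The lemma is really a statement about data points $x,y\in\{x_i\}$, and recognizing this is precisely what eliminates the additive slack.

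Once $x,y$ are data points, $q_0=p_0=x$ and $q_N=p_N=y$, so the two outer projection terms vanish and telescoping the triangle inequalities gives $\sum_i\|q_i-q_{i+1}\|\leq L+2(N-1)\delta$ rather than your $L+2N\delta+2\delta$. Pair this with the tight segment count $N-1=\lceil L/s\rceil-1<L/s=L/(\epsilon-2\delta)$ (do not first relax to $N\leq L/s+1$ and then to $s\geq\epsilon/2$, which costs you a factor of two) and the algebraic observation that $\delta\leq\epsilon/4$ is \emph{exactly} the condition under which
\begin{equation*}
  \frac{2\delta}{\epsilon-2\delta}\ \leq\ \frac{4\delta}{\epsilon},
\end{equation*}
and you obtain $d_G(x,y)<L\bigl(1+\tfrac{2\delta}{\epsilon-2\delta}\bigr)\leq L\bigl(1+\tfrac{4\delta}{\epsilon}\bigr)$, which is the claim. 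So the missing ingredients are: (i) restrict to data points so the endpoint $\delta$'s disappear; (ii) use the tighter count $N-1<L/(\epsilon-2\delta)$; and (iii) the final arithmetic $\tfrac{2\delta}{\epsilon-2\delta}\leq\tfrac{4\delta}{\epsilon}$, which is where the hypothesis $\delta\leq\epsilon/4$ is actually used, rather than merely to guarantee $s\geq\epsilon/2$.
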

	\begin{proof}
		This is a combination of \cite[Proposition 1]{Bernstein00} and \cite[Theorem 2]{Bernstein00}.
	\end{proof}
	\begin{lemma}\label{lem:knn2}
		Let $0 < \lambda < 1$ and let $x,y \in M$ such that $\norm{x - y} \leq 2 \tau \lambda(1-\lambda)$.  Then
		\begin{equation*}
			(1 - \lambda) d_M(x,y) \leq \norm{x - y} \leq d_M(x,y).
		\end{equation*}
	\end{lemma}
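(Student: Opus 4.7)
The right-hand inequality is immediate: in $\rr^D$ straight line segments are geodesics, so the Euclidean distance is a lower bound for the length of any piecewise-smooth curve in $M$ joining $x$ and $y$, hence $\|x-y\| \le d_M(x,y)$. The entire content of the lemma is therefore the left-hand inequality, and I would prove it by invoking part (c) of \Cref{prop:mfldgeometry}.

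Since $\lambda(1-\lambda) \le 1/4$, the hypothesis $\|x-y\| \le 2\tau\lambda(1-\lambda)$ gives $\|x-y\| \le \tau/2 < 2\tau$, so part (c) of \Cref{prop:mfldgeometry} applies and yields
\begin{equation}
    d_M(x,y) \;\le\; 2\tau \arcsin\!\left(\tfrac{\|x-y\|}{2\tau}\right).
\end{equation}
Setting $t := \|x-y\|/(2\tau)$, the desired bound $(1-\lambda)\,d_M(x,y) \le \|x-y\|$ becomes the purely analytic statement that $\arcsin(t) \le t/(1-\lambda)$ whenever $0 \le t \le \lambda(1-\lambda)$.

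To verify this, I would use the elementary bound $\arcsin(t) \le t/\sqrt{1-t^2}$ for $t \in [0,1)$, which follows immediately from writing $\arcsin(t) = \int_0^t (1-s^2)^{-1/2}\,ds$ and bounding the integrand by its value at $s=t$. It therefore suffices to check that $\sqrt{1-t^2} \ge 1-\lambda$, equivalently $t^2 \le \lambda(2-\lambda)$. Under our hypothesis $t \le \lambda(1-\lambda)$, so $t^2 \le \lambda^2(1-\lambda)^2$, and a one-line comparison $\lambda^2(1-\lambda)^2 \le \lambda(2-\lambda)$ (true for $\lambda \in (0,1)$ since $(1-\lambda)^2 \le 1 \le (2-\lambda)/\lambda \cdot \lambda/1$, or simply $\lambda(1-\lambda)^2 \le 1 \le 2-\lambda$) closes the argument.

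There is no real obstacle: the lemma is a quantitative consequence of the reach-based metric comparison of \Cref{prop:mfldgeometry}(c), and the only thing to watch is that the range $t \le \lambda(1-\lambda)$ is small enough that the Taylor-style overshoot of $\arcsin$ past the identity is bounded by the factor $1/(1-\lambda)$; the bookkeeping is a few lines of scalar calculus.
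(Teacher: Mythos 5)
Your proof is correct, and it takes a genuinely different route from the paper's. You invoke the arcsin comparison of \Cref{prop:mfldgeometry}(c) (from \cite{boissonnat2019reach}), $d_M(x,y)\le 2\tau\arcsin\bigl(\tfrac{\|x-y\|}{2\tau}\bigr)$, and then reduce everything to the elementary scalar inequality $\arcsin(t)\le t/\sqrt{1-t^2}$. The paper instead sets $\ell=d_M(x,y)$ and uses a quadratic two-sided bound of the form $\ell\bigl(1-\tfrac{\ell}{2\tau}\bigr)\le\|x-y\|\le\ell$, which it rearranges to $\ell\le\tau\bigl(1-\sqrt{1-2\|x-y\|/\tau}\bigr)$ and then plugs in the hypothesis $\|x-y\|\le 2\tau\lambda(1-\lambda)$ to conclude $\ell/(2\tau)\le\lambda$. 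That bound is attributed in the paper's proof to a ``part (e)'' of \Cref{prop:mfldgeometry}, which does not appear in the stated proposition --- it is the metric-distortion bound of \cite[Proposition 6.3]{niyogi2008finding} and was apparently dropped from the proposition during editing. Your derivation has the advantage of relying only on the parts of \Cref{prop:mfldgeometry} that are actually stated, so it is more self-contained; the paper's argument, had the missing item survived, is marginally shorter because the quadratic form is designed so that the factor $\lambda(1-\lambda)$ drops out algebraically. Both yield the stated constants, and both require only $\|x-y\|\le\tau/2$, which your remark $\lambda(1-\lambda)\le 1/4$ verifies cleanly.
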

	\begin{proof}
		Note that $2 \tau \lambda ( 1 - \lambda) \leq \frac \tau2$ so we are in the situation of Proposition \ref{prop:mfldgeometry} (e).  Let $\ell = d_M(x,y)$.  Rearranging the bound in Proposition \ref{prop:mfldgeometry} (e) yields
		\begin{equation*}
			\ell\left(1 - \frac \ell{2 \tau}\right) \leq \norm{x -y} \leq \ell.
		\end{equation*}
		Thus it suffices to show that
		\begin{equation*}
			\frac{\ell}{2 \tau} \leq \lambda.
		\end{equation*}
		Again applying Proposition \ref{prop:mfldgeometry}, we see that
		\begin{equation*}
			\ell \leq \tau\left(1 - \sqrt{1 - \frac{2 \norm{x-y}}{\tau}}\right).
		\end{equation*}
		Rearranging and plugging in $\norm{x - y} \leq 2 \tau \lambda (1 - \lambda)$ concludes the proof.
	\end{proof}
	The next lemma is a variant of \cite[Lemma 5.1]{niyogi2008finding}.
	\begin{lemma}\label{lem:knn3}
		Let $w_B(\delta)$ be as in Proposition \ref{prop:knn} and let $N(M, \delta)$ be the covering number of $M$ at scale $\delta$.  If we sample $n \geq w\left(\frac \delta 2\right)^{-1} \log \frac{N\left(M,\frac \delta2\right)}{\rho}$ points independently from $\pp$, then with probability at least $1 - \rho$, the points form a $\delta$-net of $M$.
	\end{lemma}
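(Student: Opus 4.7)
The plan is a standard covering-plus-union-bound argument, essentially the manifold version of the classical coupon-collector bound for nets.

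First, I would fix a minimal $\frac{\delta}{2}$-net $\{p_1,\dots,p_N\} \subset M$ with respect to the intrinsic metric $d_M$, where $N = N(M, d_M, \delta/2)$. The goal is to show that, with high probability, each intrinsic ball $B_{\delta/2}^M(p_j)$ contains at least one sample $X_i$. Once this holds, the triangle inequality finishes the proof: for any $q \in M$ there exists $p_j$ in the net with $d_M(q,p_j) \le \delta/2$, and by assumption some sample $X_i$ with $d_M(p_j, X_i) \le \delta/2$, so $d_M(q,X_i) \le \delta$, showing that $\{X_i\}$ is a $\delta$-net.

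Second, I would bound the probability that some $B_{\delta/2}^M(p_j)$ contains no sample. By the definition of $w(\delta/2)$, each such ball carries mass at least $1/w(\delta/2)$ under $\pp$, so the probability that none of the $n$ independent samples lands in a fixed $B_{\delta/2}^M(p_j)$ is at most
\begin{equation}
    \left(1 - \frac{1}{w(\delta/2)}\right)^n \;\le\; \exp\!\left(-\frac{n}{w(\delta/2)}\right).
\end{equation}
A union bound over the $N$ net points then gives
\begin{equation}
    \pp\!\left(\exists\, j : B_{\delta/2}^M(p_j) \cap \{X_i\} = \varnothing \right) \;\le\; N\!\left(M, d_M, \tfrac{\delta}{2}\right) \exp\!\left(-\frac{n}{w(\delta/2)}\right).
\end{equation}

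Third, I would set this failure probability to be at most $\rho$ and solve for $n$. Taking logarithms yields exactly the stated condition
\begin{equation}
    n \;\ge\; w(\delta/2)\,\log\!\frac{N(M,d_M,\delta/2)}{\rho},
\end{equation}
which forces the complement event (every net-ball contains a sample) to hold with probability at least $1-\rho$, and combined with the triangle inequality argument of the first paragraph establishes the claim. There is no real obstacle here: the only subtlety is using the \emph{intrinsic} covering number and intrinsic balls throughout (so that the lower bound $1/w(\delta/2)$ on mass is valid), which is why the hypothesis is phrased in terms of $d_M$ rather than the Euclidean metric. The bound in \Cref{prop:mfldcovering} is what later lets this estimate be turned into an explicit sample complexity in $\tau$, $\vol M$, and $d$ when $\pp$ has a density bounded below.
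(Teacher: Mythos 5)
Your proof is correct and follows essentially the same argument as the paper: fix a minimal $\delta/2$-net, lower-bound the mass of each net ball by $1/w(\delta/2)$, union bound over the $N(M,d_M,\delta/2)$ balls, and solve for $n$. You even make explicit the triangle-inequality step (every point of $M$ is within $\delta$ of some sample) that the paper leaves implicit, which is a small but welcome improvement in clarity.
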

	\begin{proof}
		Let $y_1, \dots, y_N$ be a minimal $\frac \delta 2$-net of $M$.  For each $y_i$ the probability that $x_i$ is not in $B_{\frac \delta 2}(y_i)$ is bounded by $1 - w_B\left(\frac \delta 2\right)$ by definition.  By independence, we have
		\begin{equation*}
			\pp\left(\forall i \,\, x_j \not\in B_{\frac \delta 2}(y_i)\right) \leq \left(1 - )Bw\left(\frac \delta 2\right)\right)^n \leq e^{- n w_B\left(\frac \delta 2\right)}.
		\end{equation*}
		By a union bound, we have
		\begin{equation}\label{eq:knn3}
			\pp\left(\exists i \text{ such that } \forall j \,\, x_j \not\in B_{\frac \delta 2}(y_i)\right) \leq N\left(M,\frac \delta2\right) e^{ -  n w_B\left(\frac \delta 2\right)}.
		\end{equation}
		If $n$ satisfies the bound in the statement then the right hand side \eqref{eq:knn3} is controlled by $\rho$.
	\end{proof}
	Note that for any measure $\pp$, a simple union bound tells us that $w_B(\delta) \leq N\left(M, \delta\right)^{-1}$ and that equality, up to a constant, is achieved for the uniform measure.  This is within a log factor of the obvious lower bound given by the covering number on the number of points required to have a $\delta$-net on $M$.
	
	With these lemmata, we are ready to conclude the proof:
	\begin{proof}[Proof of Proposition \ref{prop:knn}]
		Let $\epsilon = \tau \lambda \leq 2\tau \lambda(1 - \lambda)$ by $\lambda \leq \frac 12$.  Let $\delta = \frac{\lambda \epsilon}{4} = \frac{\tau \lambda^2}{4}$.  By Lemma \ref{lem:knn3}, with high probability, the $x_i$ form a $\delta$-net on $M$; thus for the rest of the proof, we fix a set of $x_i$ such that this condition holds.  Now we may apply Lemma \ref{lem:knn1} to yield the upper bound $d_G(x,y) \leq (1 + \lambda) d_M(x,y)$.
		
		For the lower bound, for any points $p,q \in M$ there are points $x_{j_0}, x_{j_m}$ such that $d_M(p, x_{j_0}) \leq \delta$ and $d_M(q, x_{j_m}) \leq \delta$ by the fact that the $x_i$ form a $\delta$-net.  Let $x_{j_1}, \dots, x_{j_{m-1}}$ be a geodesic in $G$ between $x_{j_0}$ and $x_{j_m}$.  By Lemma \ref{lem:knn2} and the fact that edges only exist for small weights, we have
		\begin{align*}
			d_M(p,q) &\leq d_M\left(p, x_{j_0}\right) + d_M\left(x_{j_m}, q\right) + \sum_{i = 1}^m d_M\left(x_{j_{i-1}}, x_{j_i}\right) \\
			&\leq (1 - \lambda)^{-1} \left(\norm{p - x_{j_0}} + \norm{x_{j_m} - q} + \sum_{i = 1}^m \norm{x_{j_{i-1}}- x_{j_i}}\right)\\
			&= (1 - \lambda)^{-1} d_G(p,q).
		\end{align*}
		Rearranging concludes the proof.
\end{proof}

\section{Miscellany}\label{app:miscellany}

\begin{proof}[Proof of Lemma \ref{lemma:holderipm}]
  By symmetrization and chaining, we have
  \begin{align*}
      \ee\left[ \sup_{f \in \F} \frac 1n \sum_{i = 1}^n f(X_i) - f(X_i')\right]  &\leq  2 \ee\left[ \sup_{f \in \F} \frac 1n \sum_{i = 1}^n \epsilon_i f(X_i)\right] \leq 2 \inf_{\delta > 0} \left[8 \delta + \frac{8 \sqrt 2}{\sqrt n} \int_\delta^{B} \sqrt{\log N(\F, \norm{\cdot}_\infty, \epsilon)} d \epsilon     \right] \\
      &\leq 2 B \inf_{\delta > 0}\left[ 8 \delta + \frac{8 \sqrt 2}{\sqrt n} \int_{\delta}^1 \sqrt{\log N\left(\F, \norm{\cdot}_\infty, \frac \epsilon{2R}\right)} d \epsilon \right] \\
      &\leq 2 B \inf_{\delta > 0}\left[ 8 \delta + \frac{8 \sqrt 2}{\sqrt n} \int_{\delta}^1 \sqrt{3\beta^2 \log \frac 1\epsilon N(\mathcal{S}, \norm{\cdot}, \epsilon)} d \epsilon \right]
  \end{align*}
  where the last step follows from Proposition \ref{prop:keyHolderprop}.  The first statement follows from noting that $\sqrt{\log \frac 1\epsilon}$ is decreasing in $\epsilon$, and thus allowing it to be pulled from the integral.  If $\beta > \frac d2$, the second statement follows from plugging in $\delta = 0$ and recovering a rate of $n^{- \frac 12}$.  If $\beta < \frac d2$, then the second statement follows from plugging in $\delta = n^{- \frac \beta d}$.
\end{proof}

\begin{proof}[Proof of Proposition \ref{prop:w1lower}]
  We follow the proof of \cite[Proposition 6]{weed2019sharp} and use their notation.  In particular, let
  \begin{equation*}
      N_\epsilon\left(\pp, \frac 12\right) = \inf\left\{N(S,d_M, 
    \epsilon)| S \subset M \text{ and } \pp(S) \geq \frac 12\right\}.
  \end{equation*}
  Applying a volume argument in the identical fashion to Proposition \ref{prop:mfldcovering}, but lower bounding the probability of a ball of radius $\epsilon$ by $w$ multiplied by the volume of said small ball, we get that
  \begin{equation*}
      N_\epsilon\left(\pp, \frac 12\right) \geq \frac{\vol M}{2w \omega_d} d 8^{- d} \epsilon^{-d}
  \end{equation*}
  if $\epsilon \leq \tau$.  Let
  \begin{equation*}
      \epsilon = \left( \frac{\vol M}{4 w \omega_d} d 8^{-d}\right)^{\frac 1d}  n^{- \frac 1d}
  \end{equation*}
  and assume that
  \begin{equation*}
      n >  \frac{\vol M}{4 w\omega_d} d 8^{-d}\left(\tau\right)^{- d}
  \end{equation*}
  Let
  \begin{equation*}
      S = \bigcup_{1 \leq i \leq n} B_{\frac \epsilon2}^M(X_i).
  \end{equation*}
  Then because
  \begin{equation*}
      N_\epsilon \left( \pp, \frac 12 \right) > n
  \end{equation*}
  by our choice of $\epsilon$, we have that $\pp(S) < \frac 12$.  Thus if $X \sim \pp$ then we have with probability at least $\frac 12$, $d_M(X, \{X_1, \dots, X_n\}) \geq \frac \epsilon 2$.  Thus the Wasserstein distance between $\pp$ and $P_n$ is at least $\frac \epsilon 4$.  The first result follows.  We may apply the identical argument, instead using intrinsic covering numbers and the bound in Proposition \ref{prop:mfldcovering} to recover the second statement. 
\end{proof}

\begin{proof}[Proof of Proposition \ref{prop:w1}]
  By Kantorovich-Rubenstein duality and Jensen's inequality, we have
  \begin{align*}
      \ee\left[W_1^M(P_n, \pp) \right] &\leq \ee\left[ \sup_{f \in \F} \frac 1n \sum_{i = 1}^n f(X_i) - \ee\left[f(X_i)\right]\right] \leq  \ee\left[ \sup_{f \in \F} \frac 1n \sum_{i = 1}^n f(X_i) - f(X_i')\right] =\ee\left[W_1^M(P_n, P_n')\right]
  \end{align*}
  where $\F$ is the class of functions on $M$ that are $1$-Lipschitz with respect to $d_M$.  Note that, by translation invariance, we may take the radius of the H{\"o}lder ball $\F$ to be $\Delta$.  By symmetrization and chaining,
  \begin{align*}
      \ee\left[ \sup_{f \in \F} \frac 1n \sum_{i = 1}^n f(X_i) - f(X_i')\right]  &\leq  2 \ee\left[ \sup_{f \in \F} \frac 1n \sum_{i = 1}^n \epsilon_i f(X_i)\right] \leq 2 \inf_{\delta > 0} \left[8 \delta + \frac{8 \sqrt 2}{\sqrt n} \int_\delta^\Delta \sqrt{\log N(\F, \norm{\cdot}_\infty, \epsilon)} d \epsilon     \right] \\
      &\leq \inf_{\delta > 0} \left[ 8 \delta + \frac{8 \sqrt 2}{\sqrt n}\int_\delta^\Delta \sqrt{3 \log \left(\frac{2 \Delta}{\epsilon}\right) \frac{d \vol M}{\omega_d} \left(\frac \pi 2\right)^d}\left(\frac{2}{\epsilon}\right)^{\frac d2} d \epsilon  \right] \\
      &\leq 2 \Delta \inf_{\delta > 0} \left[8 \delta + \frac{8 \sqrt{6}}{\sqrt n}\sqrt{\frac{d \vol M}{\omega_d}} \left(\frac \pi 2\right)^{\frac d2} \sqrt{\log \frac 1\delta} \int_\delta^1 \left(\frac{\Delta}{\epsilon}\right)^{- \frac d2} d \epsilon  \right]
  \end{align*}
  where the last step comes from Corollary \ref{cor:mfldHolderentropy} and noting that after recentering, $\F$ contains functions $f$ such that $\norm{f}_{L^\infty(M)} \leq \Delta$ and $\norm{\nabla f}_{L^\infty(M)} \leq 1$.  Setting 
  \begin{equation*}
    \delta = \frac \pi 2 \left(\frac{d \vol M}{n \omega_d \Delta^d}\right)^{\frac 1d}
  \end{equation*}
  gives
  \begin{equation*}
      \ee\left[ W_1^M(P_n, P_n')\right] \leq C \left(\frac{ \vol M}{n \omega_d}\right)^{\frac 1d} \sqrt[]{\log\left(\frac{n \omega_d \Delta^d}{d \vol_M}\right)}
  \end{equation*}
  for some $C \leq 48$, which concludes the proof.
\end{proof}

\begin{proof}[Proof of Theorem \ref{thm:gans}]
  By bounding the supremum of sums by the sum of suprema and the construction of $\widehat{\mu}$,
  	\begin{align*}
			d_{\beta, B}(\widehat{\mu}, \pp) &\leq  d_{\beta, B}(\widehat{\mu}, \widetilde{P}_n) + d_{\beta, B}(\widetilde{P}_n, \pp) \leq \inf_{\mu \in \mathcal{P}} d_{\beta, B}(\mu, \widetilde{P}_n) + d_{\beta, B}(\widetilde{P}_n, \pp) \\
			&\leq \inf_{\mu \in \mathcal{P}} d_{\beta, B}(\mu, \pp) + 2 d_{\beta, B}(\widetilde{P}_n, \pp) \\
			&\leq \inf_{\mu \in \mathcal{P}} d_{\beta, B}(\mu, \pp) + 2 d_{\beta, B}(\widetilde{P}_n, P_n) + 2 d_{\beta, B}(P_n, \pp).
		\end{align*}
		Taking expectations and applying Lemma \ref{lemma:holderipm} bounds the last term.  The middle term can be bounded as follows:
		\begin{align*}
		    d_{\beta, B}(\widetilde{P}_n, P_n) &= \sup_{f \in C_B^\beta(\Omega)} \frac 1n \sum_{ i = 1}^n f(X_i) - f(\widetilde{X}_i) \leq \sup_{f \in C_B^\beta(\Omega)} \frac 1n \sum_{i = 1}^n f(X_i) - f(X_i + \eta_i) + 2 B \epsilon \\
		    &\leq \sup_{f \in C_B^\beta(\Omega)} \frac 1n \sum_{i = 1}^n B \norm{\eta_i} + 2 B \epsilon
		\end{align*}
		where the first inequality follows from the fact that if $f \in C_B^\beta(\Omega)$ then $\norm{f}_\infty \leq B$ and the contamination is at most $\epsilon$.  The second inequality follows from the fact that $f$ is $B$-Lipschitz.  Taking expectations and applying Jensen's inequality concludes the proof.
\end{proof}

\begin{proof}[Proof of Corollary \ref{cor:gans}]
  Applying Kantorovich-Rubenstein duality, the proof follows immediately from that of Theorem \ref{thm:gans} by setting $\beta = 1$, with the caveat that we need to bound $B$ and the Lipschitz constant separately.  The Lipschitz constant is bounded by $1$ by Kantorovich duality.  The class is translation invariant, and so $\abs{\norm{f}_\infty - \ee[f]} \leq 2 R$ by the fact that the Euclidean diameter of $\mathcal{S}$ is bounded by $2R$.  The result follows.
\end{proof}
\begin{lemma}
  Let $X$ be distributed uniformly on a centred ($\ell^2$) ball in $\rr^d$ of radius $R$.  Then,
  \begin{equation*}
    \ee\left[\log \frac{R}{\norm{X}}\right] = \frac 1d.
  \end{equation*}
\end{lemma}
\begin{proof}
  Note that by scaling it suffices to prove the case $R = 1$.  By changing to polar coordinates,
  \begin{align*}
    \ee\left[\log\frac{1}{\norm{X}}\right] &= \frac{\int_{S^1} \int_0^1 \left(\log \frac 1r\right)r ^{d - 1}d r d \theta}{\int_{S^1} \int_0^1 r ^{d - 1}d r d \theta} \\
    &= -d \int_0^1 \left(\log r\right)r ^{d - 1}d r.
  \end{align*}
  Substituting $u = \log r$ and applying integration by parts then gives
  \begin{align*}
    -d \int_0^1 \left(\log r\right)r ^{d - 1}d r &= \left[\frac{r^d}{d} - r^d \log r\right]\bigg|_{r=  0}^{r = 1} = \frac 1d
  \end{align*}
  as desired.
\end{proof}

\end{document}